\documentclass{article}

\PassOptionsToPackage{numbers, compress}{natbib}
\usepackage[preprint]{neurips_2026}

\usepackage[utf8]{inputenc} 
\usepackage[T1]{fontenc}    
\usepackage{hyperref}       
\usepackage{url}            
\usepackage{booktabs}       
\usepackage{amsfonts}       
\usepackage{nicefrac}       
\usepackage{microtype}      
\usepackage{xcolor}         
\usepackage{amsmath}
\usepackage{amssymb}
\usepackage{amsthm}
\usepackage{bm}
\usepackage{verbatim}
\usepackage{tikz}
\usetikzlibrary{arrows.meta, decorations.pathmorphing, calc, positioning}

\definecolor{hl-blue}{RGB}{0, 0, 180}
\definecolor{hl-green}{RGB}{0, 128, 0}
\definecolor{hl-red}{RGB}{196, 0, 0}
\hypersetup{
	colorlinks=true,       
	linkcolor=hl-red,      
	citecolor=hl-blue,     
	filecolor=magenta,     
	urlcolor=hl-green         
}

\newcommand{\ie}{\textit{i.e.}}
\newcommand{\eg}{\textit{e.g.}}
\newcommand{\ma}[1]{\ensuremath{\mathsf{#1}}}
\renewcommand{\vec}[1]{\ensuremath{\bm{#1}}}
\DeclareMathOperator*{\argmin}{arg\,min}

\newtheorem{definition}{Definition}[section]
\newtheorem{proposition}{Proposition}[section]

\newtheorem{lemma}{Lemma}[section]
\newtheorem{remark}{Remark} 

\title{A Generalized Tikhonov Layer for Interpretable-by-design Graph Neural Networks}

\author{%
	Nicolas Tremblay \\
	UiT the Arctic University of Norway\\
	\& CNRS, France \\
	\texttt{nicolas.tremblay@cnrs.fr}
	\And
	Benjamin Ricaud \\
	UiT the Arctic University of Norway\\
	\texttt{benjamin.ricaud@uit.no} 
	\And
	Filippo Maria Bianchi \\
	UiT the Arctic University of Norway\\
	\& NORCE Norwegian Research Centre \\
	\texttt{filippo.m.bianchi@uit.no}
}

\begin{document}

	\maketitle

	\begin{abstract}
		We propose the \emph{Tikhonov layer}, a graph neural network layer that is interpretable by design: once trained, its learned parameters directly reveal which node features and which aspects of the graph topology were leveraged for prediction. In practice, the layer's propagation matrix takes the closed-form $\ma{R} = (p(\ma{L})+\ma{Q})^{-1}\ma{Q}$, where $\ma{L}$ is the normalized graph Laplacian, $\ma{Q} = \mathrm{diag}(q_1,\ldots,q_n)$ a learnable diagonal matrix of positive node-importance scores, and $p(\cdot)$ a learnable polynomial. 
		For any input feature $\vec{x}$, the layer output $\ma{R}\vec{x}$ is the exact minimizer of a generalized graph Tikhonov problem that trades off node-level data fidelity against a topology-driven regularization penalty. 
		The learned pair $\{\{q_i\},p\}$ constitutes a built-in explanation: large $q_i$ indicates that node $i$'s own features drive the prediction, while small $q_i$ signals reliance on the local graph topology; the shape of $p$ reveals whether homophily, heterophily, or a band-pass response is exploited. 
		Expressivity is preserved by routing complexity through a dedicated, arbitrarily deep \emph{$\ma{Q}$-network} that produces the importance scores, while the Tikhonov layer itself remains transparent. 
		We prove that distinct node-importance matrices yield distinct propagation operators, structurally coupling the explanation to the computation. 
		Additionally, the Tikhonov layer provides, in a single layer, a global receptive field, mitigating both oversmoothing and oversquashing. 
		Experiments on standard graph classification benchmarks confirm that the model matches (and sometimes outperforms) opaque baselines while producing interpretable and faithful explanations.
	\end{abstract}
	
	\section{Introduction}
	Graph neural networks (GNNs) have seen rapid development over the last decade, advancing expressivity and performance on tasks such as graph and node classification. In graph machine learning, the data is twofold. On one hand is the structure, often represented as a graph $\mathcal{G}=(\mathcal{V},\mathcal{E})$ of $n=|\mathcal{V}|$ nodes interconnected by $m=|\mathcal{E}|$, possibly weighted, edges. On the other hand are the features, a matrix $\ma{X}\in\mathbb{R}^{n\times d}$ of which the $i$-th line represents the $d$-dimensional data defined over node $i$. In the standard layer formulation (with $\sigma$ a point-wise non-linearity, $\ma{W}$ a matrix of learnable weights):
	\begin{equation}
		\label{eq:GNN_layer}
		\ma{X}^{(k+1)} = \sigma\left(\ma{R}\ma{X}^{(k)}\ma{W}^{(k)}\right),
	\end{equation}
	the graph propagation matrix $\ma{R}\in\mathbb{R}^{n\times n}$ encodes how features from neighboring nodes are aggregated, and layers are stacked to progressively widen each node's receptive field.  Early choices for $\ma{R}$ were normalized versions of the adjacency or Laplacian matrices, or polynomials of these matrices; accounting for the class of so-called \textit{spectral} GNNs. More versatile models quickly appeared, under the generic name of \textit{message-passing} GNNs\footnote{The matrix form of equation~\eqref{eq:GNN_layer} is in fact too restrictive to encompass the full versatility of the message-passing paradigm, but it suffices for our purpose here and is much easier to read.}, where all entries $\ma{R}_{ij}$  could be independently learned. 
	For graph-level tasks, a permutation-invariant pooling operation (e.g., max, sum, or mean) followed by an MLP head maps node representations to predictions. We focus on graph-level classification and regression tasks for conciseness, though the setting also directly applies to node-level tasks.
	
	\noindent\textbf{Research questions.} 
	In the context of GNNs, interpretability is arguably even more complex than in classical neural networks: one must explain not only how the input features are progressively transformed and leveraged throughout the network, but also how the graph topology itself is exploited during the process. Are both the features and the topology used to solve the task? Or perhaps only the features or only the topology? Or perhaps only the features on some parts of the graph and the topology elsewhere? When the topology is used, how is it leveraged?

	\noindent\textbf{Proposed contribution.}
	We address these questions via the design of the propagation matrix $\ma{R}$ itself. 
	In its simplest, mono-channel form, our contribution constrains $\ma{R}$ to the form $\ma{R} = (p(\ma{L})+\ma{Q})^{-1}\ma{Q}$, with $\ma{L}$ the graph's normalized Laplacian\footnote{$\ma{L}=\ma{I}-\ma{D}^{-1/2}\ma{AD}^{-1/2}$ with $\ma{A}$ the adjacency matrix and $\ma{D}$ the diagonal matrix of degrees. 
		If $i$ is an isolated node (hence with non-invertible degree), we set $\forall j, \ma{L}_{ij}=\ma{L}_{ji}=0$. 
		Recall that the spectrum of $\ma{L}$ is contained in $[0,2]$.}, $\ma{Q} = \text{diag}(q_1, \ldots, q_n)$ a learnable diagonal matrix with positive entries, and $p(\cdot)$ a learnable  polynomial satisfying $0< p(\lambda)< 1$ on $[0,2]$. 
	This is not arbitrary:$\,\forall\vec{x}\hspace{-0.05cm}\in\hspace{-0.08cm}\mathbb{R}^n$, $\ma{R}\vec{x}$ is the unique closed-form solution of 
	the generalized graph Tikhonov problem:
	\begin{equation}
		\label{eq:VI}
		\ma{R}\, \vec{x} = \argmin_{\vec{z}\in\mathbb{R}^n} ~ \sum_{i\in\mathcal{V}}
		q_i(x_i-z_i)^2 + \vec{z}^\top p(\ma{L})\vec{z},
	\end{equation}
	in which each $q_i$ controls the relative importance of node $i$'s own features against the pressure exerted by the topology term $\vec{z}^\top p(\ma{L})\vec{z}$. 
	When $q_i$ is large, the original feature at node $i$ is preserved; when $q_i$ is small, the information at that node is mostly discarded in favor of a topology-driven solution. 
	The learned $p$ governs the nature of this influence~\cite{tremblay2018design}: an increasing $p$ penalizes high frequencies, pulling outputs toward smooth, homophilic values; a decreasing $p$ penalizes low frequencies, enabling heterophily; higher-degree polynomials yield band-pass responses.

	\begin{figure*}[t]
		\centering
		\hfill
		\begin{minipage}
			{0.3\textwidth}
			\small(\textbf{a) Node-level explanation}
			\centering
			\includegraphics[width=0.9\textwidth]{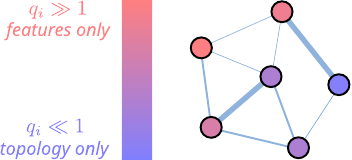}
		\end{minipage}\hfill
		\begin{minipage}
			{0.3\textwidth}
			\small(\textbf{b) Spectral explanation}
			\centering
			\includegraphics[width=0.9\textwidth]{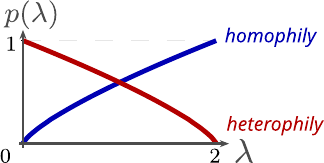}
		\end{minipage}
		\hfill~
		\caption{%
			\textbf{The explanation $\{\{q_i\}_{i\in\mathcal{V}}, p(\cdot)\}$.}
			\textbf{(a)}~After training, the learned~$q_i$ values provide a node-level explanation that can be plotted as a heatmap over $\mathcal{G}$:
			warm nodes (large~$q_i$) rely on their own features, cool nodes (small~$q_i$)
			are governed by graph topology.
			\textbf{(b)}~The shape of the learned polynomial~$p(\lambda)$ reveals how the graph topology influences the nodes with small $q_i$. For instance, an increasing profile penalizes high frequencies (favors homophily),
			a decreasing one penalizes low frequencies (favors heterophily).
			Together, $\{\{q_i\}_{i\in\mathcal{V}}, p(\cdot)\}$ \emph{is} the explanation.
		}
		\label{fig:explanation}
	\end{figure*}

	\begin{figure*}[t]
		\centering
		\begin{minipage}
			{0.42\textwidth}
			\small(\textbf{a) large $q_i$ everywhere}
			\centering
			\includegraphics[width=\textwidth]{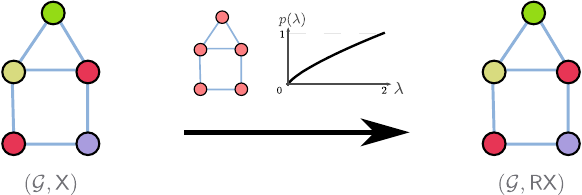}
		\end{minipage}\hfill
		\begin{minipage}
			{0.42\textwidth}
			\small(\textbf{b) mixed $q_i$}
			\centering
			\includegraphics[width=\textwidth]{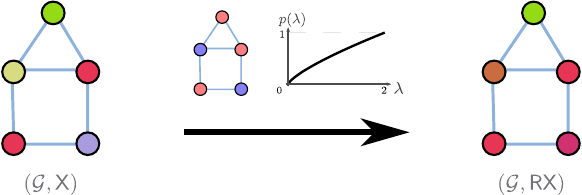}
		\end{minipage}
		\caption{%
			\textbf{Illustration of $\ma{R}\vec{x}$.} \textbf{(a)} large $q_i$ everywhere. $\ma{R}\approx\ma{I}$ regardless of $p(\lambda)$: the model ignores topology and behaves like an MLP.  \textbf{(b)} mixed $q_i$. Nodes with large $q_i$ keep their features. Nodes with small $q_i$ are influenced by their neighborhood (according to $p(\lambda)$). Here, with $p(\lambda)$ increasing, high frequencies are penalized, and homophily is encouraged.
		}
		\label{fig:illustration_Rx}
	\end{figure*}

	Crucially, our architecture consists of \emph{a single Tikhonov layer} of this 
	form. The final node features, e.g, those fed to the global pooler followed by the MLP, are simply obtained as $$\sigma\left(\ma{R}\ma{X}\ma{W}\right)=\sigma\left((p(\ma{L})+\ma{Q})^{-1}\ma{Q}\ma{X}\ma{W}\right).$$
	This deliberate simplicity is what makes the model interpretable by design: 
	the learned values $(q_i)_{i \in \mathcal{V}}$ provide a direct and faithful answer 
	to the research questions above. 
	By inspecting the $q_i$'s after training, one can 
	identify which nodes rely on their own features (those with large $q_i$), which nodes are governed by the graph topology (those with small $q_i$), and in such cases, how the topology affects the features (via the learned polynomial $p$). 
	One can, in particular, directly see whether the topology is exploited at all (if all $q_i\gg1$, $\ma{R}$ tends to the identity matrix and the Tikhonov layer reduces to an MLP). This is done \textit{without} any post-hoc approximation, since the explanation \emph{is} the model itself. 
	As such, we will refer to the pair $\{\{q_i\}_{i\in\mathcal{V}},p(\cdot)\}$ as the \emph{explanation}: see Figures~\ref{fig:explanation} and~\ref{fig:illustration_Rx} for illustrations. 
	
	Remarkably, expressivity is not sacrificed for the sake of interpretability.
	Rather than learning the $(q_i)_{i\in\mathcal{V}}$ as free parameters, they are
	produced from $(\mathcal{G},\ma{X})$ by a dedicated GNN — which we call the \emph{$\ma{Q}$-network} — that
	can be made arbitrarily deep and expressive. This design enables the model to naturally handle graphs of varying sizes, and conditions the scores $q_i$ on the graph's topology and node features. 
	This $\ma{Q}$-network concentrates the majority of the representational power of the model in a single, well-defined place. 
	The rest of the architecture (the Tikhonov layer and the final MLP) can thus keep its transparency and interpretability. 
	In this sense, our model follows a \emph{separation of concerns}: complexity lives in the $\ma{Q}$-network, interpretability lives in the Tikhonov layer. 
	See Fig.~\ref{fig:schema} for an illustration of the full architecture.

	\begin{figure*}[t]
		\begin{center}
			\includegraphics[width=\textwidth]{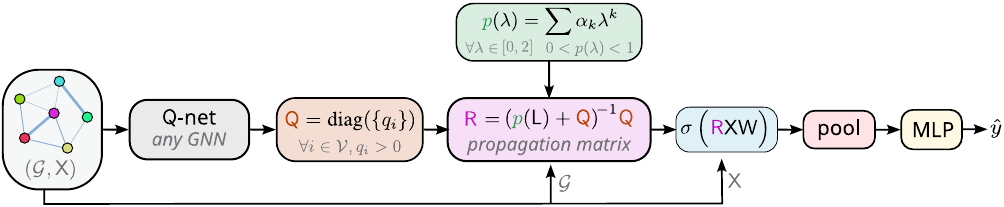}
		\end{center}
		\caption{%
			\textbf{Overview of the Tikhonov layer.} A $\ma{Q}$-network (any GNN)
			produces the node importances~$\ma{Q}=\text{diag}(\{q_i\})$; a single Tikhonov layer
			propagates features globally; pooling + an MLP head yield the prediction (here for graph classification). The learnable parameters are those of the $\ma{Q}$-network, the coefficients of the polynomial $p(\lambda)$, the weights $\ma{W}$, and the weights of the MLP head. 
		}
		\label{fig:schema}
	\end{figure*}

	\noindent\textbf{Additional properties.}
	Beyond interpretability, the Tikhonov layer naturally mitigates the effect of two notorious pathologies of deep 
	GNNs, namely \emph{oversmoothing} and \emph{oversquashing}.
	
	\textit{Oversmoothing.} Stacking many message-passing layers causes all node representations to collapse~\cite{li2018deeper}. Our architecture avoids involuntary oversmoothing in two ways. First, using a \emph{single} Tikhonov layer avoids depth-induced accumulation. Second, since $p(\lambda)>0$ on all of $[0,2]$, the operator $p(\ma{L})$ is invertible and $\ma{R}=(p(\ma{L})+\ma{Q})^{-1}\ma{Q}\to\mathbf{0}$ as all $q_i\to 0$: the only way to collapse node representations is to drive all importance scores to zero, which is a deliberate model choice rather than an unavoidable pathology. 
	
	\textit{Oversquashing.} In message-passing GNNs, long-range sensitivity $|\partial(\ma{R}\vec{x})_i/\partial x_j|$ decays exponentially with the hop-distance $d_{\mathcal{G}}(i,j)$ at a rate that is fixed by the architecture, regardless of model width~\cite{alon2021bottleneck}. In the Tikhonov layer the analogous quantity is $|\ma{R}_{ij}|$, which decays exponentially with $d_{\mathcal{G}}(i,j)$ at rate $2/(K\sqrt{\kappa})$ with $K$ the degree of $p(\cdot)$ and $\kappa=\operatorname{cond}_2(\ma{I}+\ma{Q}^{-1/2}p(\ma{L})\ma{Q}^{-1/2})$ (Proposition~\ref{prop:decay}). Crucially, $\kappa$ is \emph{learnable}: when the model drives some $q_i$'s small, $\kappa$ grows and $2/(K\sqrt{\kappa})\to 0$, making the decay negligible. The model thus learns to adjust its spatial reach end-to-end, trading local precision against long-range sensitivity through the $q_i$'s.
	
	Note that oversquashing and oversmoothing can occur in the Q-network as it is a standard potentially deep GNN: mitigation strategies from the literature should be applied to address them as needed.

	\section{Related works}\label{sec:related}
	
	\textbf{Spectral and Laplacian-based GNNs.}
	Early spectral GNNs~\cite{bruna2014spectral} defined convolutions via the eigendecomposition of $\ma{L}$; \citet{defferrard2016chebyshev} replaced this with Chebyshev polynomial approximations, and \citet{kipf2017semi} further simplified to a first-order filter. 
	All of these use polynomials of $\ma{L}$ as propagation operators. 
	Our $\ma{R}=(p(\ma{L})+\ma{Q})^{-1}\ma{Q}$ is instead the \emph{regularized inverse} of a polynomial of $\ma{L}$, corresponding to the closed-form solution of a Tikhonov problem. 
	\textsc{APPNP}~\cite{gasteiger2019predict} uses a personalized PageRank resolvent $\alpha(\ma{I}-(1-\alpha)\hat{\ma{A}})^{-1}$, structurally close to our formulation when $\ma{Q}=\alpha\ma{I}$, but with a fixed scalar $\alpha$; \textsc{GDC}~\cite{gasteiger2019diffusion} similarly uses fixed diffusion kernels. 
	The crucial distinction is our \emph{learnable, node-wise} $\ma{Q}=\mathrm{diag}(q_1,\ldots,q_n)$, which enables interpretability.
	ARMA graph convolutions~\cite{bianchi2021graph} also implement rational spectral filters; our homogeneous case is rational as well, but the learnable node-wise $\ma{Q}$ gives a variational and interpretable non-stationary extension.
	
	\noindent\textbf{Graph signal processing and Tikhonov regularization.}
	In graph signal processing~\cite{shuman2013emerging,ortega2022introduction}, the Laplacian quadratic form $\vec{z}^\top\ma{L}\vec{z}$ measures signal smoothness. 
	Graph Tikhonov regularization~\cite{zhou2004regularization,smola2003kernels} denoises a signal by minimizing $\|\vec{x}-\vec{z}\|^2_\ma{Q}+\vec{z}^\top\ma{L}\vec{z}$, whose solution is $(\ma{L}+\ma{Q})^{-1}\ma{Q}\vec{x}$, and has been studied in semi-supervised learning~\cite{zhu2003semi}. 
	To our knowledge, this has not been used as a \emph{learnable} GNN layer with a node-wise $\ma{Q}$. 
	When $\ma{Q}$ is heterogeneous with sparse large entries and $p$ penalizes high frequencies, the operator is also related to \emph{graph inpainting}~\cite{narang2013signal,chen_signal_2014}: nodes with large $q_i$ act as anchors, and the rest are interpolated by topology.
	
	\noindent\textbf{Attention and node importance in GNNs.}
	Graph attention networks~\cite{velickovic2018graph} learn edge-level importance weights. 
	Our $q_i$ plays an analogous but distinct role: it governs the balance between node $i$'s own features and topological pressure at the \emph{node} level, through a variational formulation. 
	Unlike an attention coefficient, which only affects its two endpoints, a single $q_i$ influences the solution at every other node through the resolvent $(p(\ma{L})+\ma{Q})^{-1}$, giving it a global footprint.
	
	\noindent\textbf{Self-explainable and interpretable GNNs.}
	Our model is a \emph{self-explainable} GNN (SE-GNN): interpretability is built into the architecture, not recovered post-hoc via methods such as \textsc{GNNExplainer}~\cite{ying2019gnnexplainer}, \textsc{PGExplainer}~\cite{luo2020parameterized}, or gradient saliency~\cite{pope2019explainability}, which produce explanations without faithfulness guarantees (see~\citet{longa2025explaining} for a survey; \citet{azzolin2025reconsidering} for a critical analysis).
	Typical SE-GNNs such as \textsc{SE-GNN}~\cite{dai2021towards} or \textsc{ProtGNN}~\cite{zhang2022protgnn} follow a two-module design $g=f\circ q$: an explanation extractor $q$ identifies a subgraph, and a classifier $f$ maps it to a prediction.  
	This factorization is critically vulnerable: $f$ can achieve optimal risk while the explanation produced by $q$ is provably unrelated to the model's actual computations~\cite{azzolin2026degenerate}.
	Our architecture avoids this two-module failure mode since there is no detached explanatory subgraph passed to a classifier as an arbitrary code. 
	The prediction pipeline is a \emph{single computation} $\text{MLP}(\text{pool}(\sigma((p(\ma{L})+\ma{Q})^{-1}\ma{Q}\,\ma{X}\ma{W})))$, where $\ma{Q}$ and $p$ are constitutive parts of the propagation operator $\ma{R}$. 
	Proposition~\ref{prop:structural_faithfulness} formalizes this structural coupling: for fixed $p$, the map $\ma{Q}\mapsto\ma{R}$ is injective, so changing the node-importance explanation necessarily changes the propagation operator. 
	While the $\ma{Q}$-network may still learn shortcut or label-encoding patterns, the explanation is faithful to the Tikhonov propagation induced by the model. 
	Also, rather than selecting a discrete subgraph, the Tikhonov layer assigns a continuous scalar importance $q_i$ to each node, directly quantifying the local balance between features and topology: a graded, spatially resolved answer better suited to interpretability than a binary selection.

	\section{Methodology}\label{sec:contribution}
	In its general form, the multi-channel Tikhonov layer we propose can be written as a $J$-channel layer: $\sigma\left(\ma{R}_1\ma{X}\ma{W} | \cdots | \ma{R}_J\ma{X}\ma{W}\right)$, 
	where (i) the notation $\left(\ma{M}_1 | \cdots | \ma{M}_J\right)$ represents column-wise concatenation of the matrices $\{\ma{M}_j\}$; (ii) $\ma{R}_j$ reads  $\ma{R}_j=\left(p_j(\ma{L})+\ma{Q}_j\right)^{-1}\ma{Q}_j$ where $p_j(\cdot)$ are learnable polynomials verifying $0< p_j(\lambda)< 1$ on $[0,2]$. Each channel enjoys the variational interpretation of Eq.~\eqref{eq:VI}; (iii)~The $\ma{Q}_j$'s are diagonal matrices of positive node-importance parameters. Each $\ma{Q}_j$ is built via a dedicated $\ma{Q}$-network: there are thus $J$ independent $\ma{Q}$-networks; (iv) there is only one learnable matrix weight $\ma{W}$. This uniqueness enables identifiability even when $J>1$ (see Proposition~\ref{prop:multichannel_id}). 
	
	Each channel $j$ learns a global graph filter $p_j(\lambda)$ along with where on the graph and with what intensity it should be applied on the features, via the diagonal entries of  $\ma{Q}_j$. If the data exhibits both homophily and heterophily at different locations and/or scales, the multi-channel design enables the model to learn different local behaviors. 
	This is similar to the ``multi-head attention'' in transformers, where each head can learn different patterns. In our experimental section, we will set $J=1$. 
	
	\subsection{Theoretical properties}
	
	We establish the fundamental properties of the Tikhonov layer. Unless stated otherwise, all results are for the single-channel case (extensions to multi-channel are immediate); proofs are in appendix~\ref{app:proofs}. Let us recall our notation: $\ma L$ is the normalized Laplacian with eigenvalues $\{\lambda_k\}$, $p(\cdot)$ a polynomial of order $K>0$ verifying $0<p(\lambda)<1$ for $\lambda\in[0,2]$, $\ma Q$ a diagonal matrix of strictly positive entries,  $\ma{R}=(p(\ma{L})+\ma{Q})^{-1}\ma{Q}$ the Tikhonov operator, and $d_{\mathcal{G}}(i,j)$ the hop distance between $i$ and $j$ in $\mathcal{G}$. 
	
	\begin{definition}[Complete $K$-hop support]
		Let $\mathbf{P} = p(\mathbf{L})$. It is well-known that $\mathbf{P}$ is $K$-localized: $d_{\mathcal{G}}(i,j) > K \implies P_{ij} = 0$. We say that $\mathbf{P}$ has \emph{complete $K$-hop support} if the converse also holds:
		\[
		P_{ij} \neq 0 \iff d_{\mathcal{G}}(i,j) \le K.
		\]
	\end{definition}
	Assuming that $\ma P$ has complete $K$-hop support is a very mild condition. For instance, if the coefficients of $p$ are drawn at random from a continuous distribution—as is typically done during initialization before training—then $\ma P$ has complete $K$-hop support almost surely; see Lemma~\ref{lem:Khop} for a precise statement. Moreover, we only use this assumption to simplify claims (iii) of Propositions~\ref{prop:properties} and~\ref{prop:receptive}.
	\begin{proposition}[Basic properties of $\ma{R}$]\label{prop:properties}
		(i)~The inverse of $p(\ma{L})+\ma{Q}$ exists: $\ma{R}$ is well-defined. (ii)~The spectrum of $\ma{R}$ is contained in $(0,1)$; in particular, $\ma{R}$ is injective. (iii) Under complete $K$-hop support, $\ma{R}$ is symmetric if and only if the $q_i$'s are constant on each connected component of $\mathcal{G}$.
	\end{proposition}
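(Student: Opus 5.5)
For (i) and (ii), I will work with the symmetric positive definite matrix $\ma{P}=p(\ma{L})$. Since $\ma{L}$ is symmetric with spectrum in $[0,2]$ and $0<p(\lambda)<1$ there, $\ma{P}$ is symmetric with eigenvalues $p(\lambda_k)\in(0,1)$, so $\ma{P}\succ 0$. As $\ma{Q}\succ 0$ as well, $\ma{P}+\ma{Q}\succ0$ is invertible, which gives (i). For (ii), I will use the similarity
\[
\ma{Q}^{1/2}\ma{R}\ma{Q}^{-1/2}=\ma{Q}^{1/2}(\ma{P}+\ma{Q})^{-1}\ma{Q}^{1/2}=\bigl(\ma{I}+\ma{Q}^{-1/2}\ma{P}\ma{Q}^{-1/2}\bigr)^{-1}.
\]
The matrix $\ma{M}=\ma{Q}^{-1/2}\ma{P}\ma{Q}^{-1/2}$ is symmetric positive definite by congruence. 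Hence $\ma{I}+\ma{M}$ has eigenvalues $1+\mu$ with $\mu>0$, and its inverse has eigenvalues $1/(1+\mu)\in(0,1)$. Because $\ma{R}$ is similar to this matrix, it is diagonalizable with spectrum in $(0,1)$. In particular $0$ is not an eigenvalue, so $\ma{R}$ is injective.

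For (iii), I will first rewrite symmetry algebraically. $\ma{R}$ is symmetric iff $(\ma{P}+\ma{Q})^{-1}\ma{Q}=\ma{Q}(\ma{P}+\ma{Q})^{-1}$. Multiplying on both sides by $\ma{P}+\ma{Q}$, this holds iff $\ma{Q}(\ma{P}+\ma{Q})=(\ma{P}+\ma{Q})\ma{Q}$, i.e.\ iff $\ma{Q}\ma{P}=\ma{P}\ma{Q}$. Entrywise, this reads $(q_i-q_j)P_{ij}=0$ for all $i,j$.

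Both directions then follow from this entrywise condition.
\begin{itemize}
\item[($\Leftarrow$)] Suppose $q$ is constant on each component. If $P_{ij}\neq0$, then $d_{\mathcal{G}}(i,j)\le K$ by $K$-localization, so $i$ and $j$ lie in the same component and $q_i=q_j$. This direction does not need the complete-support assumption.
\item[($\Rightarrow$)] Suppose $\ma{R}$ is symmetric and take adjacent nodes $i\sim j$. Then $d_{\mathcal{G}}(i,j)=1\le K$ because $K\ge1$, so complete $K$-hop support gives $P_{ij}\neq0$ and hence $q_i=q_j$. Propagating equality along paths shows that $q$ is constant on each connected component.
\end{itemize}

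I expect the only points needing care to be the following.
\begin{itemize}
\item The equivalence ``symmetric $\iff$ commutes'' must be written with both implications; this works because $\ma{P}+\ma{Q}$ is invertible.
\item Isolated nodes form their own components, so the statement is vacuous for them.
\item Complete support is used only for distance-one pairs.
\end{itemize}
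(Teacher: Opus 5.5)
Your proposal is correct and follows essentially the same route as the paper: positive definiteness of $p(\ma{L})+\ma{Q}$ for (i), the similarity of $\ma{R}$ to the symmetric matrix $(\ma{I}+\ma{Q}^{-1/2}p(\ma{L})\ma{Q}^{-1/2})^{-1}$ for (ii), and the reduction of symmetry to $\ma{P}\ma{Q}=\ma{Q}\ma{P}$, i.e.\ $(q_i-q_j)P_{ij}=0$, for (iii), with complete $K$-hop support used only on adjacent pairs. Your added details are correct clarifications of steps the paper states more tersely: the justification of the commutation equivalence via invertibility of $\ma{P}+\ma{Q}$, and the observation that ($\Leftarrow$) needs only $K$-localization.
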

	Point (iii) means that heterogeneous $q_i$'s make the influence of node~$j$ on node~$i$ differ from that of node~$i$ on node~$j$: this asymmetric coupling lets the model represent direction-dependent influence.
	\begin{proposition}[Effect of $q_i$ on feature preservation and local $p(L)$-harmonicity]\label{rem:regimes}\label{prop:harmonicity}
		Let $p_{\text{max}}=\max_k p(\lambda_k)$. Set $\vec{z}=\ma{R}\vec{x}$. For every $i\in\mathcal{V}$:
		\begin{equation*}
			|z_i - x_i|\leq \frac{p_{\text{max}}}{q_i}\,\|\vec{x}\|_2,
			\qquad
			|[p(\ma{L})\vec{z}]_i| \;\leq\; \sqrt{q_i}\,\sqrt{p_{\text{max}}}\,\left\|\vec{x}\right\|_2.
		\end{equation*}
	\end{proposition}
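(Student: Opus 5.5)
The plan is to work directly from the first-order optimality condition of the Tikhonov problem~\eqref{eq:VI} and to compare the optimal objective value with a single well-chosen competitor. Write $\ma P = p(\ma L)$. Since $\ma L$ is symmetric and $0<p(\lambda)<1$ on $[0,2]$, $\ma P$ is symmetric positive definite with eigenvalues in $(0,p_{\max}]$. Hence $\ma P^{1/2}$ exists, and $\|\ma P^{1/2}\vec v\|_2^2=\vec v^\top\ma P\vec v\le p_{\max}\|\vec v\|_2^2$ for every $\vec v$. By Proposition~\ref{prop:properties}(i), $\vec z=\ma R\vec x$ is well defined and satisfies $(\ma P+\ma Q)\vec z=\ma Q\vec x$. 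This rearranges to the key identity
\[
[\ma P\vec z]_i \;=\; q_i\,(x_i-z_i)\qquad\text{for all } i\in\mathcal V .
\]

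\textbf{Energy comparison.} Next I use that $\vec z$ minimizes $J(\vec w)=\sum_i q_i(x_i-w_i)^2+\vec w^\top\ma P\vec w$. Evaluating at the competitor $\vec w=\vec x$ gives
\[
\sum_i q_i(x_i-z_i)^2+\vec z^\top\ma P\vec z\;\le\; J(\vec x)\;=\;\vec x^\top\ma P\vec x\;\le\; p_{\max}\|\vec x\|_2^2 .
\]
Both terms on the left are nonnegative, so each is separately bounded by $p_{\max}\|\vec x\|_2^2$. This yields
\[
\sqrt{q_i}\,|x_i-z_i|\le\sqrt{p_{\max}}\,\|\vec x\|_2
\qquad\text{and}\qquad
\|\ma P^{1/2}\vec z\|_2\le \sqrt{p_{\max}}\,\|\vec x\|_2 .
\]
I deliberately avoid bounding $\|\vec z\|_2\le\|\vec x\|_2$. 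Because $\ma R$ is not symmetric in general (Proposition~\ref{prop:properties}(iii)), having its spectrum in $(0,1)$ does not give a contraction in the $2$-norm. Finding a route that bypasses this was the only real obstacle, and the energy comparison is the way around it.

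\textbf{Harmonicity bound.} By the key identity, $|[\ma P\vec z]_i| = \sqrt{q_i}\cdot\sqrt{q_i}\,|x_i-z_i|$. The first inequality of the energy comparison then gives $|[\ma P\vec z]_i|\le\sqrt{q_i}\sqrt{p_{\max}}\|\vec x\|_2$, which is the second claim.

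\textbf{Feature-preservation bound.} By the key identity, $|z_i-x_i| = |[\ma P\vec z]_i|/q_i$, so it remains to bound $|[\ma P\vec z]_i|$ independently of $q_i$. Write $[\ma P\vec z]_i=(\ma P^{1/2}\vec e_i)^\top(\ma P^{1/2}\vec z)$ and apply Cauchy--Schwarz. The first factor satisfies $\|\ma P^{1/2}\vec e_i\|_2\le\sqrt{p_{\max}}$, and the energy comparison gives $\|\ma P^{1/2}\vec z\|_2\le\sqrt{p_{\max}}\|\vec x\|_2$. Hence $|[\ma P\vec z]_i|\le p_{\max}\|\vec x\|_2$. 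Dividing by $q_i$ yields $|z_i-x_i|\le \tfrac{p_{\max}}{q_i}\|\vec x\|_2$, which completes the proof.
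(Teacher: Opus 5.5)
Your proof is correct. The harmonicity bound is obtained exactly as in the paper: the identity $[p(\ma{L})\vec{z}]_i=q_i(x_i-z_i)$ is combined with the energy comparison $\sum_j q_j(x_j-z_j)^2\le F(\vec{z})\le F(\vec{x})\le p_{\max}\|\vec{x}\|_2^2$.

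Where you differ is the feature-preservation bound. The paper proves an auxiliary operator-level lemma. It writes $\ma{R}-\ma{I}=-\ma{Q}^{-1}p(\ma{L})\ma{R}$ and identifies $p(\ma{L})\ma{R}=(\ma{Q}^{-1}+p(\ma{L})^{-1})^{-1}$. This matrix is symmetric positive definite with $\|p(\ma{L})\ma{R}\|_2\le p_{\max}$, because $\ma{Q}^{-1}+p(\ma{L})^{-1}\succeq p(\ma{L})^{-1}\succeq p_{\max}^{-1}\ma{I}$. You instead reuse the same energy inequality to read off $\|\ma{P}^{1/2}\vec{z}\|_2\le\sqrt{p_{\max}}\|\vec{x}\|_2$, and finish with Cauchy--Schwarz in the $\ma{P}$-inner product.

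Your route has three advantages. It is more economical, since a single variational inequality drives both bounds. It never inverts $p(\ma{L})$, so it applies verbatim when $p(\ma{L})$ is only positive semidefinite, for instance the classical graph Tikhonov case $p(\lambda)=\lambda$; the paper's lemma, as written, does not cover that case. It also gives the sharper intermediate estimate $|z_i-x_i|\le q_i^{-1}\sqrt{\ma{P}_{ii}}\,\sqrt{\vec{x}^\top\ma{P}\vec{x}}$, which is small when $\vec{x}$ is already smooth.

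The paper's route, in exchange, gives a uniform operator-norm statement about $p(\ma{L})\ma{R}$. It also exposes that this product is symmetric (a ``parallel sum'' of $\ma{Q}$ and $p(\ma{L})$) even though $\ma{R}$ is not. That is how it avoids, just as you do, any need for $\|\ma{R}\|_2\le 1$.
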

	Thus, when $q_i\gg p_{\text{max}}$, node $i$'s features are preserved.
	Conversely, when $q_i\ll p_{\text{max}}$, node~$i$'s output is nearly $p(\ma{L})$-harmonic: its value 
	is dictated by the graph topology rather than by its own feature.
	%
	%
	\begin{proposition}[Spectral density and strict generalization]\label{prop:density}
		The family of homogeneous Tikhonov operators $\{(p(\ma{L})+q\ma{I})^{-1}q\ma{I}\}$ with $q>0$ and $0<p(\lambda)<1$ on $[0,2]$ is dense in the uniform norm in the space of continuous spectral filters $\{g(\ma{L}) \mid g:[0,2]\to(0,1)\}$. Furthermore, $\ma{R} = (p(\ma{L})+\ma{Q})^{-1}\ma{Q}$ corresponds to a spectral filter $g(\ma{L})$ if and only if $\ma{Q}$ is homogeneous ($\ma{Q}=q\ma{I}$). For heterogeneous $\ma{Q}$, $\ma{R}$ generically does not commute with $\ma{L}$, strictly extending the expressivity of spectral graph filters.
	\end{proposition}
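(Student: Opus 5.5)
The proposition has three claims, and I would handle each in turn.

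\textbf{Density.} For a homogeneous $\ma{Q}=q\ma{I}$, $\ma{R}$ is diagonal in the eigenbasis of $\ma{L}$ with eigenvalues $h(\lambda)=q/(p(\lambda)+q)$. So it suffices to show that the scalar functions $\{q/(p+q)\}$ are uniformly dense in $\{g\in C([0,2]) : g\in(0,1)\}$, since $\|g(\ma{L})-h(\ma{L})\|_2\le\sup_{[0,2]}|g-h|$.

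Given a target $g$ with values in $(0,1)$, I would invert the map: $g=q/(f+q)$ requires $f=q(1-g)/g$. Since $g$ is continuous on a compact set, $g\in[\delta,1-\delta]$ for some $\delta>0$. Choose $q$ small enough that $f$ takes values in $[q\delta/(1-\delta),\,q(1-\delta)/\delta]$, which lies inside $(0,1)$ once $q<\delta$.

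Next, approximate $f$ by a polynomial $p$ via Weierstrass, with error $\varepsilon$ smaller than the distance from $f$'s range to $\{0,1\}$; this guarantees $0<p<1$ on $[0,2]$. The map $f\mapsto q/(f+q)$ is Lipschitz on $f\ge c>0$ with constant at most $q/c^2$, so $q/(p+q)$ is uniformly within a controlled multiple of $\varepsilon$ of $g$. The one subtlety is that the Lipschitz constant scales like $q/(q\delta)^2$. This is harmless: fix $q$ first, then choose $\varepsilon$.

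\textbf{Spectral-filter characterization.} The "if" direction was shown above. For "only if", suppose $\ma{R}=g(\ma{L})$. Then $\ma{R}$ is symmetric and commutes with $\ma{L}$, hence with $p(\ma{L})$. Writing $\ma{M}=p(\ma{L})+\ma{Q}$, we have $\ma{Q}=\ma{M}\ma{R}$, so $\ma{Q}=p(\ma{L})\ma{R}+\ma{Q}\ma{R}$. Rearranging gives $\ma{Q}(\ma{I}-\ma{R})=p(\ma{L})\ma{R}$. The right-hand side is a spectral function of $\ma{L}$, hence commutes with $\ma{L}$.

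Since $\ma{I}-\ma{R}=\ma{M}^{-1}p(\ma{L})$ is invertible (Proposition~\ref{prop:properties}(ii) places the spectrum of $\ma{R}$ in $(0,1)$) and is itself a spectral function of $\ma{L}$, it follows that $\ma{Q}=p(\ma{L})\ma{R}(\ma{I}-\ma{R})^{-1}$ is a function of $\ma{L}$. In particular, $\ma{Q}$ commutes with $\ma{L}$. A diagonal $\ma{Q}$ commuting with $\ma{L}$ satisfies $(q_i-q_j)L_{ij}=0$, so $q$ is constant on each connected component.

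Here is the main obstacle. The statement claims $\ma{Q}=q\ma{I}$ globally, but the argument only forces constancy per component. On disconnected graphs, a piecewise-constant $\ma{Q}$ still gives a spectral filter whenever the eigenvalues distinguish the components appropriately. I would therefore state the result for connected $\mathcal{G}$, or interpret ``homogeneous'' componentwise, consistent with Proposition~\ref{prop:properties}(iii). Alternatively, one could use complete $K$-hop support and the symmetry criterion of that proposition directly.

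\textbf{Generic non-commutation.} For heterogeneous $\ma{Q}$ on a connected graph, the previous step shows that $\ma{Q}$ fails to commute with $\ma{L}$, and therefore $\ma{R}$ cannot be a function of $\ma{L}$. I would strengthen this to say $\ma{R}$ does not commute with $\ma{L}$ generically. The set of $(\ma{Q},p)$ for which $[\ma{R},\ma{L}]=0$ is the zero set of rational functions in the entries. It suffices to exhibit one heterogeneous example, such as the path on two nodes with $p$ constant, where the commutator is nonzero. The zero set is then a proper algebraic subset, hence of measure zero.
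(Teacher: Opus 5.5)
Your density argument matches the paper's: invert $h=q/(f+q)$, take $q$ small so that $f\in(0,1)$, then apply Weierstrass. The paper uses the sharper bound $q|f-p|/((p+q)(f+q))\le |f-p|/q$, valid for any positive $p,f$, but your order of choosing $q$ before $\varepsilon$ also works. Your derivation that $\ma{Q}$ is constant on each connected component is also correct.

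\textbf{Main gap.} Your ``main obstacle'' rests on a false claim, and because of it you end up proving a weaker statement and proposing to change the one posed. On a disconnected graph, a componentwise-constant but non-constant $\ma{Q}$ can \emph{never} give a spectral filter, so no connectivity assumption or componentwise reinterpretation is needed. The eigenvalue $0$ never distinguishes components. Each component $\mathcal{G}_c$ contributes a null vector $\vec{u}_c$ of $\ma{L}$ supported on it: $\ma{D}^{1/2}\mathbf{1}_{\mathcal{G}_c}$, or $\vec{e}_i$ for an isolated node. Any function of $\ma{L}$ acts on all of $\ker\ma{L}$ by a single scalar.

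You already have what you need. You showed $\ma{Q}=\phi(\ma{L})$ with $\phi=pg/(1-g)$, so $\ma{Q}\vec{u}_c=\phi(0)\vec{u}_c$. On the other hand $\ma{Q}\vec{u}_c=q_c\vec{u}_c$. Hence $q_c=\phi(0)$ for every $c$, i.e.\ $\ma{Q}=q\ma{I}$. Since $\vec{u}_c$ is nonzero at every node of $\mathcal{G}_c$, this even bypasses your commutator step. The paper makes the same move through $\ma{R}$: $\ma{R}$ acts on $\vec{u}_c$ by $q_c/(p(0)+q_c)$, this must equal $g(0)$, and $q\mapsto q/(p(0)+q)$ is strictly increasing.

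\textbf{Non-commutation part.} Your step only shows that $\ma{R}$ is not a \emph{function} of $\ma{L}$. That is strictly weaker than $[\ma{R},\ma{L}]\neq 0$ when $\ma{L}$ has repeated eigenvalues. Your genericity patch does not close this: a nonzero commutator on the two-node path says nothing about the zero set for the graph at hand. A constant $p$ is also outside the class $K>0$.

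The direct route comes from your own identity. $\ma{Q}=p(\ma{L})\ma{R}(\ma{I}-\ma{R})^{-1}$ holds unconditionally, so $[\ma{R},\ma{L}]=0$ alone forces $[\ma{Q},\ma{L}]=0$. The paper gets the same equivalence from $\ma{R}^{-1}=\ma{Q}^{-1}p(\ma{L})+\ma{I}$. So on a connected graph \emph{every} heterogeneous $\ma{Q}$ gives $[\ma{R},\ma{L}]\neq 0$, not just generic ones. In general, the exceptional set is the linear subspace of componentwise-constant $\ma{Q}$'s, which has measure zero as soon as $\mathcal{G}$ has an edge.
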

	In the homogeneous case ($\ma{Q}=q\ma{I}$), the Tikhonov operator's rational parameterization matches the expressivity of polynomial spectral GNNs. A heterogeneous $\ma{Q}$ breaks the stationarity assumption of traditional spectral methods, enabling flexible, highly expressive \emph{node-adaptive filtering}.
	\begin{proposition}[Global receptive field]\label{prop:receptive} 
		(i)\;$\ma{R}_{ij}=0$ if $i,j$ belong to different connected components. (ii)\;If $p(\lambda)=a+b\lambda$ with $b>0$, then $\ma{R}_{ij} > 0$ for every $(i,j)$ in
		the same connected component. (iii)\;More generally for any complete $K$-hop support $p$: for almost every $\ma{Q}$ (\emph{i.e.}, for all $\ma Q$ outside a measure-zero exceptional set): $R_{ij}\neq 0$ for every  $(i,j)$ in the same connected component.
	\end{proposition}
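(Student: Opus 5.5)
The plan is to work with the symmetrized operator. Set $\ma{P}=p(\ma{L})$ and write
\[
\ma{R}=(\ma{P}+\ma{Q})^{-1}\ma{Q}=\ma{Q}^{-1/2}\,\ma{M}^{-1}\,\ma{Q}^{1/2},\qquad \ma{M}=\ma{I}+\ma{Q}^{-1/2}\ma{P}\ma{Q}^{-1/2}.
\]
Here $\ma{M}$ is symmetric positive definite, and $R_{ij}=q_i^{-1/2}(\ma{M}^{-1})_{ij}\,q_j^{1/2}$, so the sign and support pattern of $\ma{R}$ are exactly those of $\ma{M}^{-1}$.

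\textbf{Claim (i).} Reorder the vertices by connected component. $\ma{L}$ is block diagonal, and isolated nodes give zero rows and columns, which still form their own blocks. Hence $\ma{P}=p(\ma{L})$, $\ma{P}+\ma{Q}$ and its inverse are block diagonal with the same blocks. Multiplying on the right by the diagonal $\ma{Q}$ preserves this structure, so $R_{ij}=0$ across components.

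\textbf{Claim (ii).} Here $\ma{P}=a\ma{I}+b\ma{L}=(a+b)\ma{I}-b\ma{D}^{-1/2}\ma{A}\ma{D}^{-1/2}$. On a fixed component, $\ma{P}+\ma{Q}=\ma{\Delta}-\ma{N}$, where $\ma{\Delta}$ is diagonal and positive and $\ma{N}=b\ma{D}^{-1/2}\ma{A}\ma{D}^{-1/2}$ is entrywise nonnegative and irreducible, since the component is connected and $b>0$. The matrix $\ma{P}+\ma{Q}$ is symmetric positive definite, because $p>0$ on the spectrum and $\ma{Q}\succ 0$. It is therefore a nonsingular irreducible M-matrix, whose inverse is entrywise strictly positive. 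To show this directly, write $\ma{\Delta}-\ma{N}=\ma{\Delta}^{1/2}(\ma{I}-\ma{T})\ma{\Delta}^{1/2}$ with $\ma{T}=\ma{\Delta}^{-1/2}\ma{N}\ma{\Delta}^{-1/2}\ge 0$ symmetric. Positive definiteness gives $\lambda_{\max}(\ma{T})<1$, and by Perron--Frobenius $\rho(\ma{T})=\lambda_{\max}(\ma{T})<1$. The Neumann series $\sum_k \ma{T}^k$ therefore converges, and irreducibility makes it entrywise positive. Multiplying by the positive diagonal $\ma{Q}$ gives $R_{ij}>0$.

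\textbf{Claim (iii).} Fix $(i,j)$ in one component. The map $\ma{Q}\mapsto \det(\ma{P}+\ma{Q})\,R_{ij}=\operatorname{adj}(\ma{P}+\ma{Q})_{ij}\,q_j$ is a polynomial in $(q_1,\dots,q_n)$, and $\det(\ma{P}+\ma{Q})>0$ by (i). The zero set of a nonzero polynomial has Lebesgue measure zero, and a finite union over pairs $(i,j)$ is still null. It therefore suffices to exhibit a single $\ma{Q}$ in the positive orthant with $R_{ij}\neq0$, since a polynomial vanishing on an open set vanishes identically. For this witness I will take $\ma{Q}=t\ma{I}$ with $t$ large and use the expansion
\[
\ma{R}=(\ma{I}+t^{-1}\ma{P})^{-1}=\sum_{k\ge0}(-t^{-1})^k\ma{P}^k .
\]
Let $d=d_{\mathcal{G}}(i,j)$ and $m=\lceil d/K\rceil$. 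Then $(\ma{P}^k)_{ij}=0$ for $k<m$, so $R_{ij}=(-t)^{-m}(\ma{P}^m)_{ij}+O(t^{-m-1})$. The claim follows once $(\ma{P}^m)_{ij}\neq 0$.

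\textbf{Main obstacle.} Showing $(\ma{P}^m)_{ij}\neq0$ is the hard step, because cancellations among the many length-$m$ chains of $K$-hop steps are possible in principle. My first approach would avoid this by a sharper choice of witness: graded $\ma{Q}$, or equivalently, keep $\ma{Q}=t\ma{I}$ generic and note that the coefficient of the leading power is the sum over chains $i=v_0,\dots,v_m=j$ with $d(v_{l},v_{l+1})\le K$. Failing that, I would switch to a shortest path $i=u_0,\dots,u_d=j$ and choose $\ma{Q}$ so that $q$ is huge off this path (those nodes are effectively frozen, turning the resolvent into a Schur complement restricted to the path) and moderate on it. This reduces the problem to a path-supported banded matrix. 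Along a geodesic, the entries of $\ma{P}$ joining $u_l$ to $u_{l+K}$ are nonzero by complete $K$-hop support, and the corner entry of the inverse of a banded matrix can be computed as a minor. I expect that minor to be triangular-like, so it is a nonzero product of these entries. Checking this Schur-complement limit carefully is the step I expect to require the most work.
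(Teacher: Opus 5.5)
Parts (i) and (ii) are correct and follow the paper. Your Neumann-series argument just proves the inverse-positivity of irreducible nonsingular $M$-matrices that the paper cites. Your reduction in (iii) to finding one witness $\ma Q$ where the cofactor polynomial is nonzero is also valid.

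\textbf{The gap.} You never produce a working witness, and both routes you sketch break down for $K\ge 2$.

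The homogeneous witness $\ma Q=t\ma I$ actually fails at leading order. Take the path $1$--$2$--$3$--$4$ with an extra vertex $5$ adjacent to $2$ and $3$, $K=2$, and $p(\lambda)=\alpha_0+\alpha_1\lambda+\alpha_2\lambda^2$ with $\alpha_0=\tfrac12$, $\alpha_1=-\tfrac{7}{40}$, $\alpha_2=\tfrac1{10}$. Then $0<p<1$ on $[0,2]$, and $\ma P=0.425\,\ma I-0.025\,\tilde{\ma A}+0.1\,\tilde{\ma A}^2$ has complete $2$-hop support. Yet $(\ma L^3)_{14}=-\tfrac19$ and $(\ma L^4)_{14}=-\tfrac7{18}$, so $(\ma P^2)_{14}=2\alpha_1\alpha_2(\ma L^3)_{14}+\alpha_2^2(\ma L^4)_{14}=\tfrac{7}{1800}-\tfrac{7}{1800}=0$.

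Your fallback along a $\mathcal G$-geodesic $u_0,\dots,u_d$ rests on a false expectation. Let $\ma B$ be the principal submatrix of $\ma P+\ma Q$ on the geodesic; it has full bandwidth $K$. Delete the row of $u_d$ and the column of $u_0$: for $K\ge2$ the resulting cofactor is not triangular. It contains the diagonal entries $B_{rr}=P_{u_ru_r}+q_{u_r}$ for $1\le r\le d-1$. Already for $K=2$, $d=3$ it equals $B_{01}B_{12}B_{23}-B_{01}B_{13}B_{22}-B_{02}B_{11}B_{23}+B_{02}B_{13}B_{21}$. This is a nonconstant polynomial in $q_{u_1},q_{u_2}$ and can vanish at your ``moderate'' choice.

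\textbf{The missing idea} is to keep only every $K$-th vertex of the geodesic. With $m=\lceil d/K\rceil$, set $v_r=u_{rK}$ for $0\le r<m$ and $v_m=u_d$; this is a shortest path in the $K$-hop graph. Then send $q_k\to\infty$ on all other vertices, including the intermediate geodesic ones.
\begin{itemize}
\item We have $d_{\mathcal G}(v_r,v_{r+1})\le K$ while $d_{\mathcal G}(v_r,v_s)>K$ for $|r-s|\ge2$. So the limiting reduced matrix is tridiagonal with nonzero off-diagonal entries $P_{v_rv_{r+1}}$.
\item The corner cofactor is therefore genuinely triangular and equals $\pm\prod_{r=0}^{m-1}P_{v_rv_{r+1}}\neq0$, whatever the remaining $q_{v_r}$ are.
\item Hence $R_{ij}\neq0$ once the off-path $q$'s are large enough, which is the witness you need.
\end{itemize}

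This is the paper's proof in analytic form. The paper reads off the same product as the coefficient of the monomial $\prod_{k\notin\{v_0,\dots,v_m\}}q_k$ in the Leibniz expansion of the cofactor $C_{ji}(\ma M)$. It then uses the tridiagonal zero pattern on $\{v_0,\dots,v_m\}$ to show that only the forward-shift permutation produces that monomial.
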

	Thus, generically in $\ma{Q}$, the Tikhonov operator $\ma{R}$ has a global receptive field. 
	Increasing the polynomial degree $K$ enriches the set of representable spectral filters but does \emph{not}  enlarge the spatial reach, which is already global for $K=1$. 
	While non-zero entries are good news to mitigate oversquashing, they alone do not prevent it: the magnitude of $\ma{R}_{ij}$, quantified by the next proposition, is what truly matters.
	\begin{proposition}[Spatial decay of $\ma{R}$]\label{prop:decay}
		Let the graph be connected and $\kappa$ be the condition number of $\,\ma{I}+\ma{Q}^{-1/2}p(\ma{L})\,\ma{Q}^{-1/2}$. 
		Then, for any pair $(i,j)$:
		\begin{equation*}
			|\ma{R}_{ij}| \;\leq\; 
			2\sqrt{\frac{q_j}{q_i}}\;
			\exp\!\left(-\frac{2}{K\sqrt{\kappa}}d_{\mathcal{G}}(i,j)\right).
		\end{equation*}
	\end{proposition}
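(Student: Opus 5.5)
We reduce $\ma{R}$ to the inverse of a symmetric, well-conditioned, $K$-banded matrix and then apply the classical Chebyshev-polynomial argument for the exponential decay of entries of inverses of banded matrices (Demko--Moss--Smith type). Set $\ma{M}=\ma{I}+\ma{Q}^{-1/2}p(\ma{L})\ma{Q}^{-1/2}$. Since $p(\ma{L})+\ma{Q}=\ma{Q}^{1/2}\ma{M}\ma{Q}^{1/2}$, we obtain
\[
\ma{R}=\ma{Q}^{-1/2}\ma{M}^{-1}\ma{Q}^{-1/2}\ma{Q}=\ma{Q}^{-1/2}\ma{M}^{-1}\ma{Q}^{1/2},
\qquad\text{so}\qquad
\ma{R}_{ij}=\sqrt{\frac{q_j}{q_i}}\,(\ma{M}^{-1})_{ij}.
\]
This produces exactly the prefactor $\sqrt{q_j/q_i}$, so it remains to show $|(\ma{M}^{-1})_{ij}|\le 2\exp(-2d_{\mathcal{G}}(i,j)/(K\sqrt{\kappa}))$. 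Since $p>0$ on $[0,2]$, $\ma{M}$ is symmetric positive definite. Its spectrum lies in $[a,b]$ with $a\ge 1$ and $\kappa=b/a$. It also inherits the $K$-localization of $p(\ma{L})$, because conjugation by a diagonal matrix preserves the sparsity pattern. Consequently, for any polynomial $s$ of degree $m$, $s(\ma{M})_{ij}=0$ whenever $d_{\mathcal{G}}(i,j)>mK$.

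Next, let $d=d_{\mathcal{G}}(i,j)$ and choose $m=\lceil d/K\rceil-1$, so that $mK<d$ and hence $(\ma{M}^{-1})_{ij}=(\ma{M}^{-1}-s(\ma{M}))_{ij}$ for every $s$ of degree $m$. This gives
\[
|(\ma{M}^{-1})_{ij}|\le\|\ma{M}^{-1}-s(\ma{M})\|_2=\max_{\lambda\in\mathrm{sp}(\ma{M})}|1/\lambda-s(\lambda)|\le\min_{\deg s\le m}\ \max_{\lambda\in[a,b]}|1/\lambda-s(\lambda)|.
\]
The best uniform approximation of $1/\lambda$ on $[a,b]$ has the classical explicit bound
\[
\frac{(1+\sqrt{\kappa})^2}{2b}\,q^{m+1},\qquad q=\frac{\sqrt{\kappa}-1}{\sqrt{\kappa}+1}.
\]
One can obtain it by shifting a Chebyshev polynomial, or more simply via the residual polynomial $1-\lambda s(\lambda)$, which vanishes at $0$ and is bounded by $2q^{m+1}$ after normalizing at $\lambda=0$. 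Because $a\ge 1$, the prefactor satisfies $(1+\sqrt{\kappa})^2/(2b)\le (1+\sqrt{\kappa})^2/(2\kappa)\le 2$. Alternatively, one can bound $\|\ma{M}^{-1}\|\le 1/a\le1$ and use the residual form to get a constant of $2$ directly.

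The final step converts $q^{m+1}$ into the stated exponential, and this is the only delicate part. We have $m+1=\lceil d/K\rceil\ge d/K$, and
\[
\log\frac1q=\log\frac{1+1/\sqrt{\kappa}}{1-1/\sqrt{\kappa}}=2\,\mathrm{artanh}(1/\sqrt{\kappa})\ge \frac{2}{\sqrt{\kappa}}.
\]
Together these give $q^{m+1}\le\exp(-2d/(K\sqrt{\kappa}))$. The main obstacle is the bookkeeping of the constant so that it is exactly $2$. I would try the residual-polynomial formulation first, writing $\ma{M}^{-1}-s(\ma{M})=\ma{M}^{-1}r(\ma{M})$ with $r(\lambda)=T_{m+1}\bigl(\tfrac{b+a-2\lambda}{b-a}\bigr)/T_{m+1}\bigl(\tfrac{b+a}{b-a}\bigr)$. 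This gives $\|\cdot\|_2\le \tfrac1a\cdot 2q^{m+1}\le 2q^{m+1}$, using $a\ge1$. The degenerate case $\kappa=1$ (all $q_i$ equal with $p$ constant) is trivial, since then $\ma{M}$ is a multiple of the identity.
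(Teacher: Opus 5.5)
Your proof is correct and follows the paper's route: the same similarity $\ma{R}=\ma{Q}^{-1/2}\ma{S}^{-1}\ma{Q}^{1/2}$ with $\ma{S}=\ma{I}+\ma{Q}^{-1/2}p(\ma{L})\ma{Q}^{-1/2}\succeq\ma{I}$, the Demko--Moss--Smith estimate $|(\ma{S}^{-1})_{ij}|\le 2\|\ma{S}^{-1}\|_2\,\rho_0^{d/K}$, and $\mathrm{artanh}(x)\ge x$. The only difference is that you re-derive the Demko--Moss--Smith bound directly in graph distance via the Chebyshev residual polynomial, which is where the constant $2/a\le 2$ comes from, instead of citing it; one small wording slip is that $1-\lambda s(\lambda)$ equals $1$ at $\lambda=0$ rather than vanishing there, as your explicit $r$ confirms.
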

	This bound makes the oversquashing behavior precise. Both the prefactor 
	$2\sqrt{q_j/q_i}$ and the decay rate $2/(K\sqrt{\kappa})$ depend on the learned $\ma{Q}$. When all $q_i$ are large, $\kappa \approx 1$ and the decay is rapid, making the layer act locally, similarly to a standard spectral filter. Conversely, when some $q_i$ entries are small, long-range sensitivity is boosted by two coupled effects: (i) the prefactor $\sqrt{q_j/q_i}$ amplifies distant contributions at destination nodes with small $q_i$; (ii) $\kappa$ increases, weakening the exponential attenuation across the graph. Oversquashing is thus not absent by default, but \emph{controllable by design}: the model learns the $q_i$'s end-to-end and can allocate long-range sensitivity where needed.

	\begin{proposition}[Structural coupling of $\ma{Q}$ and \ma{R}]\label{prop:structural_faithfulness}
		The map 
		$\ma{Q}\mapsto(p(\ma{L})+\ma{Q})^{-1}\ma{Q}$ is injective on 
		the set of diagonal matrices with strictly positive entries. In particular, 
		two models sharing the same $p$, $\ma{W}$, and MLP head but differing in 
		their $\ma{Q}\neq\ma{Q}'$ produce distinct propagation 
		operators.
	\end{proposition}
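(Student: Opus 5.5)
Our plan is to recover $\ma{Q}$ explicitly from $\ma{R}$ by an algebraic identity. Write $\ma{P}=p(\ma{L})$. By Proposition~\ref{prop:properties}(i), $\ma{P}+\ma{Q}$ is invertible, so $\ma{R}=(\ma{P}+\ma{Q})^{-1}\ma{Q}$ gives $(\ma{P}+\ma{Q})\ma{R}=\ma{Q}$. Rearranging yields $\ma{P}\ma{R}=\ma{Q}(\ma{I}-\ma{R})$. Since $\ma{Q}$ is invertible (strictly positive diagonal), we also have $\ma{R}^{-1}=\ma{Q}^{-1}(\ma{P}+\ma{Q})=\ma{I}+\ma{Q}^{-1}\ma{P}$. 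This uses Proposition~\ref{prop:properties}(ii), which ensures that $\ma{R}$ is invertible, its spectrum lying in $(0,1)$. Hence $\ma{R}^{-1}-\ma{I}=\ma{Q}^{-1}\ma{P}$, and this quantity is a function of $\ma{R}$ alone.

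Now suppose $\ma{R}(\ma{Q})=\ma{R}(\ma{Q}')$ for the same $p$. Then $\ma{Q}^{-1}\ma{P}=\ma{Q}'^{-1}\ma{P}$. Because $p(\lambda)>0$ on $[0,2]\supseteq\mathrm{spec}(\ma{L})$, the matrix $\ma{P}$ is symmetric positive definite, hence invertible. Right-multiplying by $\ma{P}^{-1}$ gives $\ma{Q}^{-1}=\ma{Q}'^{-1}$, so $\ma{Q}=\ma{Q}'$. An even more elementary route is to compare diagonals. Since $\ma{P}$ is positive definite, every diagonal entry satisfies $P_{ii}>0$. The diagonal of $\ma{Q}^{-1}\ma{P}$ is $P_{ii}/q_i$, so $q_i=P_{ii}/[\ma{R}^{-1}-\ma{I}]_{ii}$ is determined by $\ma{R}$. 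This second route even shows that $\ma{Q}$ can be read off explicitly from $\ma{R}$ and $p$.

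Finally, the ``in particular'' clause follows immediately. The propagation operator depends on the model only through $(p,\ma{Q})$, and with $p$ fixed the map $\ma{Q}\mapsto\ma{R}$ is injective. So $\ma{Q}\neq\ma{Q}'$ forces distinct propagation operators, regardless of the shared $\ma{W}$ and MLP head.

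There is no serious obstacle here. The only points needing care are justifying the invertibility of $\ma{R}$ and of $\ma{P}$, and both come from positivity of $p$ on the spectrum of $\ma{L}$ together with Proposition~\ref{prop:properties}. Isolated nodes, where $\ma{L}$ has zero rows, require no special treatment. In that case $P_{ii}=p(0)>0$, so the argument still applies.
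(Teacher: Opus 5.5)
Your proof is correct and is essentially the paper's argument: both write $\ma{R}^{-1}=\ma{I}+\ma{Q}^{-1}\ma{P}$ and cancel the invertible $\ma{P}=p(\ma{L})$ to conclude $\ma{Q}=\ma{Q}'$. Your diagonal-entry variant, $q_i=P_{ii}/[\ma{R}^{-1}-\ma{I}]_{ii}$ with $P_{ii}>0$, is also valid and adds an explicit formula for reading $\ma{Q}$ off from $\ma{R}$ and $p$.
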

	Unlike in two-module SE-GNN architectures, where a classifier can treat the extracted explanation as a separate label code, here $\ma{Q}$ directly defines the propagation operator seen by the MLP head. 
	Thus, the explanation cannot be changed without changing the Tikhonov computation, which guarantees faithfulness of $\ma{Q}$ to the induced propagation operator.
	
	\begin{proposition}[Joint identifiability of $(p,\ma{Q})$]\label{prop:identifiability}
		Let $(p_1,\ma{Q}_1)$ and $(p_2,\ma{Q}_2)$ be two pairs with diagonal $\ma{Q}_j\succ 0$ and polynomials $0< p_j(\lambda)< 1$ on $[0,2]$. 
		If both pairs yield the same operator, 
		$(p_1(\ma{L})+\ma{Q}_1)^{-1}\ma{Q}_1 = (p_2(\ma{L})+\ma{Q}_2)^{-1}\ma{Q}_2$, 
		then there exists $\alpha>0$ such that $\ma{Q}_2=\alpha\,\ma{Q}_1$ and $p_2(\ma{L})=\alpha\,p_1(\ma{L})$.
	\end{proposition}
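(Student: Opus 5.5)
Write $\ma{P}_j=p_j(\ma{L})$. Both $\ma{P}_j+\ma{Q}_j$ are invertible by Proposition~\ref{prop:properties}(i), and both $\ma{Q}_j$ are invertible. I would therefore invert the common operator. This gives
\[
\ma{R}^{-1}=\ma{Q}_1^{-1}(\ma{P}_1+\ma{Q}_1)=\ma{I}+\ma{Q}_1^{-1}\ma{P}_1 ,
\]
and likewise $\ma{R}^{-1}=\ma{I}+\ma{Q}_2^{-1}\ma{P}_2$. The hypothesis is thus equivalent to
\[
\ma{Q}_1^{-1}\ma{P}_1=\ma{Q}_2^{-1}\ma{P}_2, \qquad\text{i.e.}\qquad \ma{P}_2=\ma{D}\,\ma{P}_1, \quad \ma{D}:=\ma{Q}_2\ma{Q}_1^{-1}=\mathrm{diag}(d_1,\dots,d_n),\ d_i>0 .
\]
The rest of the proof shows that $\ma{D}$ is a scalar multiple of the identity.

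The key ingredient is symmetry. $\ma{L}$ is symmetric, so $\ma{P}_1$ and $\ma{P}_2$ are symmetric. Transposing $\ma{P}_2=\ma{D}\ma{P}_1$ gives $\ma{D}\ma{P}_1=\ma{P}_1\ma{D}$, that is,
\[
(d_i-d_j)(\ma{P}_1)_{ij}=0 \quad\text{for all } i,j .
\]
Hence $d_i=d_j$ whenever $(\ma{P}_1)_{ij}\neq 0$. It follows that $\ma{D}$ is constant on each connected component of the graph whose edges are the nonzero off-diagonal entries of $\ma{P}_1$.

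\textbf{Main obstacle: reaching a single scalar $\alpha$.} Two issues have to be handled.

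First, the support of $\ma{P}_1$ must reach across each connected component of $\mathcal{G}$. Under complete $K$-hop support, as in Lemma~\ref{lem:Khop}, every edge of $\mathcal{G}$ is a nonzero entry, so $\ma{D}$ is constant on each component. Without that assumption I would argue differently. $\ma{D}$ commutes with $\ma{P}_1$, so it preserves the eigenspaces of $\ma{P}_1$. Since $0<p_1<1$ and $p_1$ is a nonconstant polynomial in general, I would try to show that this forces commutation with $\ma{L}$ itself. That step is genuinely delicate, and I would fall back on the $K$-hop support hypothesis. Once $\ma{D}$ commutes with $\ma{L}$, the relation $(d_i-d_j)\ma{L}_{ij}=0$ gives constancy on components.

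Second, several components of $\mathcal{G}$ allow a different $\alpha$ on each one. The statement asserts a single global $\alpha$, so I would either assume $\mathcal{G}$ is connected, as in Proposition~\ref{prop:decay}, or state the result componentwise.

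I would also treat isolated nodes separately. There $\ma{L}_{ii}=0$, so $(\ma{P}_j)_{ii}=p_j(0)>0$. The relation then forces $d_i=p_2(0)/p_1(0)$. On a connected component containing the eigenvalue $0$, the harmonic eigenvector $\ma{D}^{1/2}\mathbf{1}$ gives the same ratio $p_2(0)/p_1(0)$. This consistency check is what should let a single $\alpha$ be extracted across components, and I expect this bookkeeping to be the delicate part.

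With $\ma{D}=\alpha\ma{I}$ and $\alpha>0$, we get $\ma{Q}_2=\alpha\ma{Q}_1$ and $p_2(\ma{L})=\ma{P}_2=\alpha\,p_1(\ma{L})$. This is the claim.
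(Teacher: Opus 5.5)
Your reduction to $\ma{P}_2=\ma{D}\ma{P}_1$ with $\ma{D}=\ma{Q}_2\ma{Q}_1^{-1}$ is exactly the paper's. The next step, however, has a genuine gap. By transposing, you only keep $\ma{D}\ma{P}_1=\ma{P}_1\ma{D}$. To conclude that $\ma{D}$ is constant on connected components you then fall back on complete $K$-hop support, which is not a hypothesis of the proposition.

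Commutation with $\ma{P}_1$ alone really is insufficient. On a single edge the spectrum of $\ma{L}$ is $\{0,2\}$, and $p_1(\lambda)=\tfrac12+\tfrac1{10}\lambda(2-\lambda)$ gives $\ma{P}_1=\tfrac12\ma{I}$, which commutes with every diagonal matrix. What the transposition throws away is that $\ma{D}\ma{P}_1=\ma{P}_2$ is itself a function of $\ma{L}$.

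The missing observation takes one line. Since $\ma{P}_1\succ 0$ is invertible, $\ma{D}=\ma{P}_2\ma{P}_1^{-1}$, and both factors commute with $\ma{L}$ (the inverse does because $\ma{P}_1$ does). Hence $\ma{D}\ma{L}=\ma{L}\ma{D}$, i.e.\ $(d_i-d_j)\ma{L}_{ij}=0$. So $\ma{D}$ is constant along every edge, and therefore on each connected component, for every admissible $p_1$. Equivalently, applying $\ma{P}_2=\ma{D}\ma{P}_1$ to an eigenvector $\vec{u}$ of $\ma{L}$ with eigenvalue $\lambda$ gives $\ma{D}\vec{u}=\bigl(p_2(\lambda)/p_1(\lambda)\bigr)\vec{u}$.

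Your treatment of several components is also needlessly hedged. You need neither connectivity nor a componentwise restatement, and the single-$\alpha$ claim is not in danger. The null-vector check you sketch is precisely the paper's argument and takes one line. Let $\vec{u}_c\neq\vec{0}$ be a null vector of $\ma{L}$ supported on component $c$: entries $\sqrt{\deg(k)}$ for $k\in c$, or $\vec{e}_i$ for an isolated node. (Writing it as $\ma{D}^{1/2}\mathbf{1}$ clashes with your $\ma{D}=\ma{Q}_2\ma{Q}_1^{-1}$.) If $\ma{D}$ equals $\alpha_c$ on $c$, then $p_2(0)\vec{u}_c=\ma{P}_2\vec{u}_c=\ma{D}\ma{P}_1\vec{u}_c=\alpha_c\,p_1(0)\vec{u}_c$. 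Since $p_1(0)>0$, this gives $\alpha_c=p_2(0)/p_1(0)$ for every $c$. With these two repairs your proof coincides with the paper's.
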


	The only operator-level non-identifiability is a global rescaling: $(p(\ma{L}),\ma{Q})$ and $(\alpha p(\ma{L}),\alpha\ma{Q})$ produce the same $\ma{R}$ since the factor cancels in $(p(\ma{L})+\ma{Q})^{-1}\ma{Q}$. 
	This is benign: the \emph{relative} pattern of the $q_i$'s, driving interpretation, is invariant to such rescaling.

	\begin{proposition}[Multi-channel identifiability]\label{prop:multichannel_id}
		Let $\ma{H}=\ma{X}\ma{W}$ have rank $n$.
		Consider the pre-activation $J$-channel layer output $\left(\ma{R}_1\ma{H}\mid\cdots\mid\ma{R}_J\ma{H}\right)$ with $\ma{R}_j=(p_j(\ma{L})+\ma{Q}_j)^{-1}\ma{Q}_j$. 
		If two parameter sets $\{(p_j,\ma{Q}_j)\}_{j=1}^J$ and 
		$\{(p'_j,\ma{Q}'_j)\}_{j=1}^J$ yield the same output up to a permutation of the channel blocks, then there exists a permutation $\pi$ of $\{1,\ldots,J\}$ and 
		positive scalars $\alpha_j$ such that, for every $j$,
		$p'_j(\ma{L}) = \alpha_{j}\, p_{\pi(j)}(\ma{L})$ and 
		$\ma{Q}'_j = \alpha_{j}\, \ma{Q}_{\pi(j)}$. 
		In other words, up to the usual finite-spectrum ambiguity for polynomials, the only non-identifiabilities are per-channel rescaling 
		and channel permutation.
	\end{proposition}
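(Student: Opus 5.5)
The plan is to reduce the multi-channel statement to the single-channel joint identifiability result (Proposition~\ref{prop:identifiability}), using the rank assumption on $\ma{H}$ to cancel it from the outputs.

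\textbf{Step 1: match the blocks.} Suppose the two outputs agree up to a permutation of the channel blocks. Then there is a permutation $\pi$ of $\{1,\ldots,J\}$ such that, for every $j$, the $j$-th block of the second output equals the $\pi(j)$-th block of the first:
\[
\ma{R}'_j\ma{H}=\ma{R}_{\pi(j)}\ma{H}, \qquad \ma{R}'_j=(p'_j(\ma{L})+\ma{Q}'_j)^{-1}\ma{Q}'_j .
\]
Here I will be careful about what "same output up to a permutation of channel blocks" means. I read it as: the $n\times Jd'$ matrices coincide after reordering the $J$ column blocks of width $d'$. Under that reading, $\pi$ exists by definition, and no fancier argument is needed.

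\textbf{Step 2: cancel $\ma{H}$.} Since $\ma{H}\in\mathbb{R}^{n\times d'}$ has rank $n$, it has a right inverse $\ma{H}^+$ with $\ma{H}\ma{H}^+=\ma{I}_n$. Multiplying the block equality on the right by $\ma{H}^+$ gives
\[
\ma{R}'_j=\ma{R}_{\pi(j)} \quad \text{for every } j .
\]
Note that it is the shared weight matrix $\ma{W}$, hence the shared $\ma{H}$, that makes this cancellation legitimate. With a separate $\ma{W}_j$ per channel, the argument would fail.

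\textbf{Step 3: apply the single-channel result.} For each $j$, apply Proposition~\ref{prop:identifiability} to the pairs $(p_{\pi(j)},\ma{Q}_{\pi(j)})$ and $(p'_j,\ma{Q}'_j)$. Both pairs satisfy its hypotheses: the $\ma{Q}$'s are positive diagonal and the polynomials take values in $(0,1)$ on $[0,2]$. The proposition then yields a scalar $\alpha_j>0$ with
\[
\ma{Q}'_j=\alpha_j\ma{Q}_{\pi(j)}, \qquad p'_j(\ma{L})=\alpha_j\,p_{\pi(j)}(\ma{L}),
\]
which is the claim. The qualifier "up to finite-spectrum ambiguity" simply records that only $p(\ma{L})$ is determined, not $p$ as a polynomial, since $p$ is pinned down only on the eigenvalues of $\ma{L}$.

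\textbf{Expected obstacle.} The main difficulty is the block-matching convention in Step 1. If two channels share identical operators, $\pi$ need not be unique. This is harmless: any $\pi$ realizing the equality works, and Steps 2 and 3 go through for it unchanged.
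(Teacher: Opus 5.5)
Your proof is correct and follows the paper's argument step for step. You identify the block correspondence $\ma{R}'_j\ma{H}=\ma{R}_{\pi(j)}\ma{H}$, cancel $\ma{H}$ with a right inverse using the rank-$n$ assumption, and then apply Proposition~\ref{prop:identifiability} to each channel.
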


	If $J$ independent weight matrices $\{\ma{W}_j\}$ are used instead of the unique $\ma{W}$ above, a 
	cross-channel compensation becomes possible: one channel's $\ma{W}_j$ can 
	absorb work from another, breaking per-channel identifiability. Sharing 
	$\ma{W}$ across channels eliminates this issue at a modest cost in 
	expressivity, and is recommended in our framework, where interpretability is a priority.

	\subsection{Practical implementation details}	
	
	\noindent\textbf{Matrix inversion.} Computing $\ma{RX}=\left(p(\ma{L})+\ma{Q}\right)^{-1}\ma{Q}\ma{X}$ for a feature matrix $\ma{X}\in\mathbb{R}^{n\times d}$ requires solving $d$ linear systems with the symmetric positive definite matrix $\ma{M}=p(\ma{L})+\ma{Q}$. An exact sparse Cholesky factorization costs up to $\mathcal{O}\left(dn^2+(Km)^{3/2}\right)$, where $m$ is the number of edges and $K=\deg(p)$. While cheaper than dense $\mathcal{O}(n^3)$ inversion, this is significantly more expensive than standard GNN layers (\eg, GIN or GCN) operating in $\mathcal{O}(dm)$. We therefore use Preconditioned Conjugate Gradient (PCG). 
	PCG solves $\ma{M}\vec{z}=\ma{Q}\vec{x}$ iteratively using matrix-vector products with $\ma{M}$ at $O(Km)$ per iteration. The number of iterations before convergence scales as $\mathcal{O}(\sqrt{\kappa(\ma{M})})$, where $\kappa(\ma{M})$ is the condition number. With a diagonal (Jacobi) preconditioner $\ma{P}=\mathrm{diag}(\ma{M})$, which is trivial to compute, the condition number is significantly reduced, typically yielding convergence in a small number of iterations. For $d$ right-hand sides, the total cost is $O(dTKm)$, where $T$ is the number of PCG iterations, yielding much better scalability than sparse Cholesky on large graphs.
	
	\begin{remark}[Effective receptive field of PCG]\label{rem:pcg-reach}
		With no preconditioner or a diagonal preconditioner such as Jacobi, each solver iteration applies node-wise rescaling and multiplication by $\ma{M}$. Since $\ma{M}$ mixes information across at most $K$ hops, after $T$ iterations the approximation's effective reach is at most $(T-1)K$ hops. Thus, while the exact Tikhonov operator is global (Proposition~\ref{prop:receptive}), a truncated PCG solve acts as a local polynomial approximation unless $T$ is large. This reflects a solver limitation, not a message-passing bottleneck: each iterate remains linear in $\vec{x}$ without nonlinear compression. The true long-range sensitivity is governed by the learned $\ma{Q}$ (Proposition~\ref{prop:decay}); $T$ merely controls approximation accuracy. For tasks requiring long-range interactions, $T$ should be set to large values (typically on the order of the graph diameter).
	\end{remark}

	To eliminate memory overhead and numerical instability, we use implicit differentiation rather than backpropagating through the solver iterations. 
	Let $\vec{z}=\ma{M}^{-1}\ma{Q}\vec{x}$ be the forward output and $\bar{\vec{z}}=\partial\ell/\partial\vec{z}$ the incoming gradient from the loss $\ell$. 
	Differentiating $\ma{M}\vec{z}=\ma{Q}\vec{x}$ with respect to $q_i$ (for instance) gives $\partial_{q_i}\ma{M}\vec{z}+\ma{M}\partial_{q_i}\vec{z}=\partial_{q_i}\ma{Q}\vec{x}$, \ie: $\partial_{q_i}\vec{z} = \ma{M}^{-1} \partial_{q_i}\ma{Q} (\vec x - \vec z)$. Applying the chain rule:
	\begin{equation*}
		\frac{\partial\ell}{\partial q_i} = \bar{\vec{z}}^\top\ma{M}^{-1} \partial_{q_i}\ma{Q} (\vec x - \vec z) = u_i(x_i-z_i)
	\end{equation*}
	where $\vec{u}=\ma{M}^{-1}\bar{\vec{z}}$ is the adjoint variable. Computing $\vec{u}$ is the only expensive step and requires one additional PCG solve with the same matrix $\ma{M}$. 
	Gradients with respect to $\vec{x}$ and polynomial coefficients can also be expressed via $\vec{u}$. 
	Thus, we run PCG in the forward pass without storing the computational graph, and use this single adjoint solve for all gradients in the backward pass.

	\noindent\textbf{Bounding the $q_i$'s.} The $\ma{Q}$-network produces unconstrained scalar outputs $\tilde{q}_i\in\mathbb{R}$ for each node $i$. To map these to strictly positive values while ensuring numerical stability, we apply $q_i = \exp\!\big(\min(\tilde{q}_i,\, \log q_{\max})\big) + q_{\min}$ 
	which clamps in log-space to prevent overflow in the exponential and adds a floor $q_{\min}$ to guarantee strict positivity. Despite this clamping, the condition number can still be substantial, making preconditioning essential for efficient CG convergence.

	\noindent\textbf{Enforcing $0< p(\cdot)< 1$ on $[0,2]$.} This constraint is enforced via Bernstein polynomials, writing $p(\lambda)=\sum_{k=0}^{K} \operatorname{sigmoid}(\theta_{k})\,b_{k,K}\!\left(\tfrac{\lambda}{2}\right)$ with $\{\theta_{k}\}$ real learnable parameters  and Bernstein basis functions $b_{k,K}(t)=\binom{K}{k}t^k(1-t)^{K-k}$ that are all positive for $t\in[0,1]$. Since $\operatorname{sigmoid}(\theta_{k})\in(0,1)$ and $\sum_k b_{k,K}(t)=1$ on $t\in[0,1]$, this parametrization naturally enforces $0<p(\lambda)<1$ for all $\lambda\in[0,2]$. 
	
	\begin{figure}[t]
		\centering
		\includegraphics[width=\textwidth]{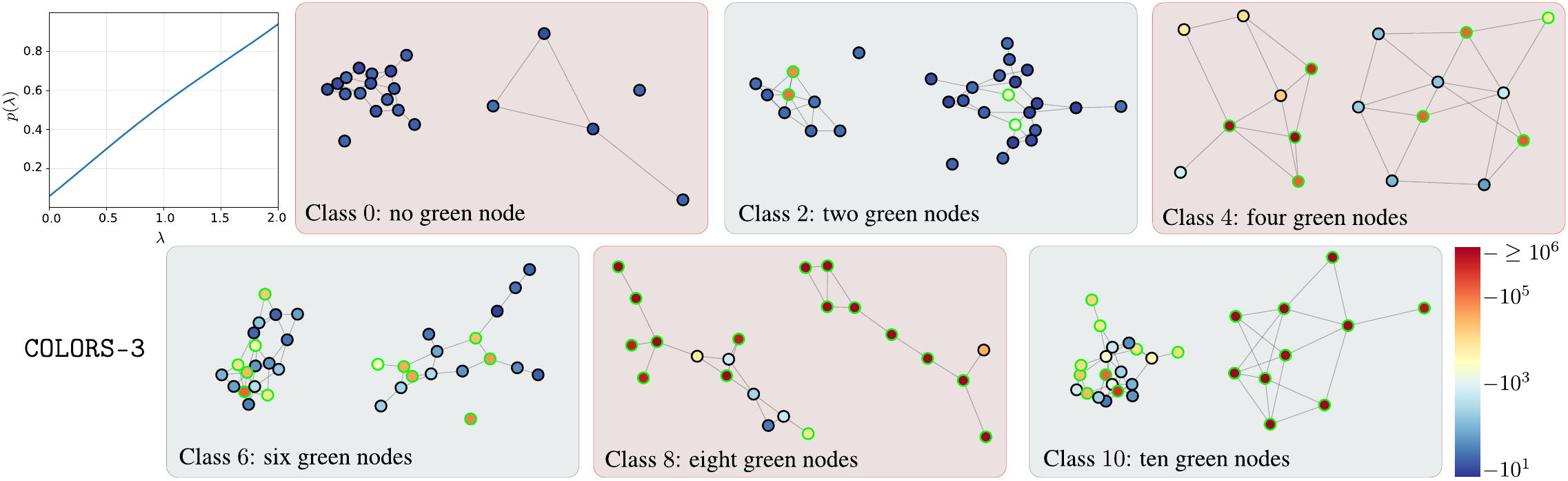}
		\caption{\textbf{Colors-3.} 
			Node color encodes the learned $q_i$ (log scale; blue: $q_i= 10$, red: $q_i\geq 10^6$); green circles mark ground truth. Top-left: learned $p(\lambda)$. 
			Best run (in terms of validation loss) out of 5 runs (with different seeds), which all produced a test accuracy of $100\%$: perfect classification.
		}
		\label{fig:colors3}
	\end{figure}
	
	\begin{figure}[t]
		\centering
		\includegraphics[width=\textwidth]{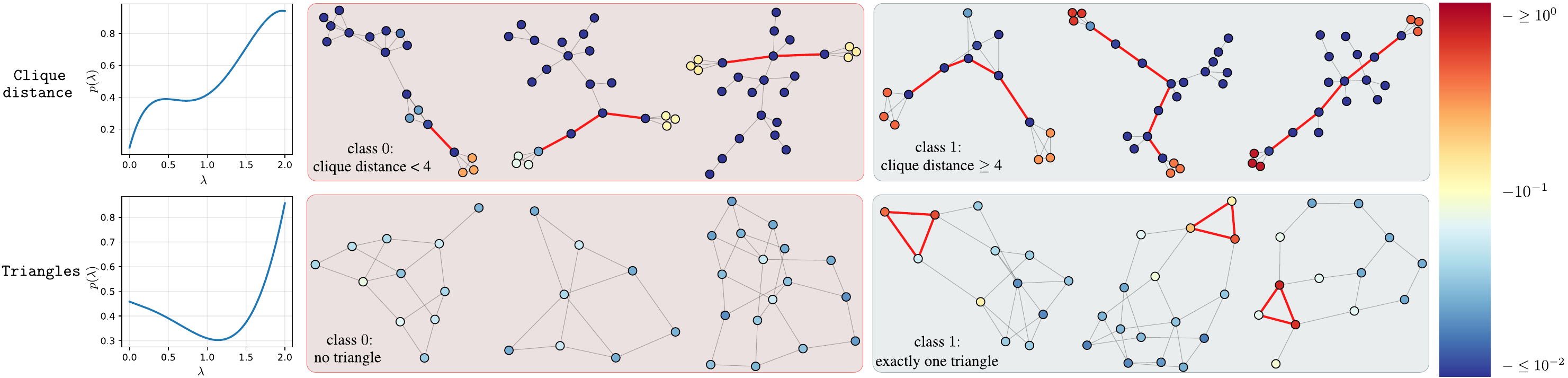}
		\caption{\textbf{Clique distance (top) and Triangles (bottom) datasets.} 
			Node color encodes the learned $q_i$ (log scale; blue: $q_i\leq 10^{-2}$, red: $q_i\geq 10^{0}$); red edges highlight ground-truth (triangle edges; shortest path between cliques). Left: learned polynomials $p$. 
			Best (in terms of validation loss) of 5 runs. 
			For \texttt{Clique distance}, the average test accuracy is $99.9\%$ (the run shown has test accuracy of $100\%$). For \texttt{Triangles}, the average test accuracy is $95\%$ (the run shown has test accuracy of $99.4\%$). }
		\label{fig:triangles}
	\end{figure}
	
	\section{Experiments}
	\label{sec:experiments}
	
	We evaluate the Tikhonov layer along two axes: \emph{interpretability}
	(does the learned explanation $\{p,\{q_i\}\}$ reveal meaningful structure?)
	and \emph{long-range propagation} (does the global receptive field help on
	tasks that require it?). Choice of hyperparameters is discussed in Appendix~\ref{app:hp}.
	
	\subsection{Interpretability on controlled tasks}
	\label{sec:exp-interpretability}
	
	We first study a few synthetic datasets for which the ground-truth role
	of features vs.\ topology is known by construction, so that the
	learned explanation can be objectively assessed.

	\noindent\textbf{Colors-3~\cite{knyazev2019understanding}.}
	This dataset requires the model to count green nodes in small graphs (we only consider the relevant first subset of the full dataset). The task is solvable by node features alone, but standard message-passing GNNs can be biased by topology. 
	The Tikhonov layer reaches a test accuracy of 100\% and, as shown in Figure~\ref{fig:colors3}, learns large $q_i$'s everywhere ($\gg 1$), effectively ignoring the graph structure and nearly reducing to an MLP. More strikingly here in the plotted explanation \footnote{This is striking because the network does not need to do this. Indeed, other explanations found very large $q$-values ($\geq 10^3$) everywhere, not necessarily correlated with the green nodes. But large $q$-values everywhere just means the layer reduces to an MLP - and an MLP solves the task perfectly!}, it assigns almost systematically higher $q_i$'s to green nodes than to non-green ones: the explanation $\{q_i\}$ automatically singles out the nodes that carry the task-relevant feature. 
	
	\noindent\textbf{Clique distance~\cite{tolmachev2021bermuda}.}
	Each graph contains two cliques connected by a path of varying length. The task is to predict whether the inter-clique distance is below~4 (class~0) or at least~4 (class~1). Node features are uninformative (all-ones): the task is purely structural. As shown in Figure~\ref{fig:triangles} (top row), the model learns an increasing $p(\lambda)$ and assigns very low $q_i$'s to the path nodes connecting the cliques, while clique nodes are assigned much larger $q_i$'s. The model thus exploits the topology of the connecting path to sense its length, with the low-$q_i$ path nodes acting as topology-sensitive relays.
	The Tikhonov layer achieves a test accuracy of 100\% (\cite{tolmachev2021bermuda} reports GIN achieving 97\%)  and provides a meaningful explanation: cliques serve as anchors and diffusion propagates between them.

	\noindent\textbf{Triangles~\cite{tolmachev2021bermuda}.}
	This dataset asks whether a graph contains exactly one triangle (class~1) or none (class~0). 
	Node features are uninformative (all-ones), making the task purely structural. 
	As shown in Figure~\ref{fig:triangles} (bottom row), the model learns a mostly low-pass filter $p(\lambda)$ and small $q_i$'s almost everywhere, hence relies primarily on topology.  
	In class-1 graphs, however, triangle vertices receive noticeably larger $q_i$'s, so the explanation pinpoints the motif without supervision. 
	The model reaches $99.4\%$ test accuracy, outperforming the best GAT baseline reported~\cite{tolmachev2021bermuda} for this dataset ($94\%$, using degree and node-ID one-hot features) and nearly matching the Transformer accuracy ($100\%$), while also revealing that the prediction is driven by identifying the triangle nodes.
	
	\begin{figure}[t]
		\centering
		\includegraphics[width=\textwidth]{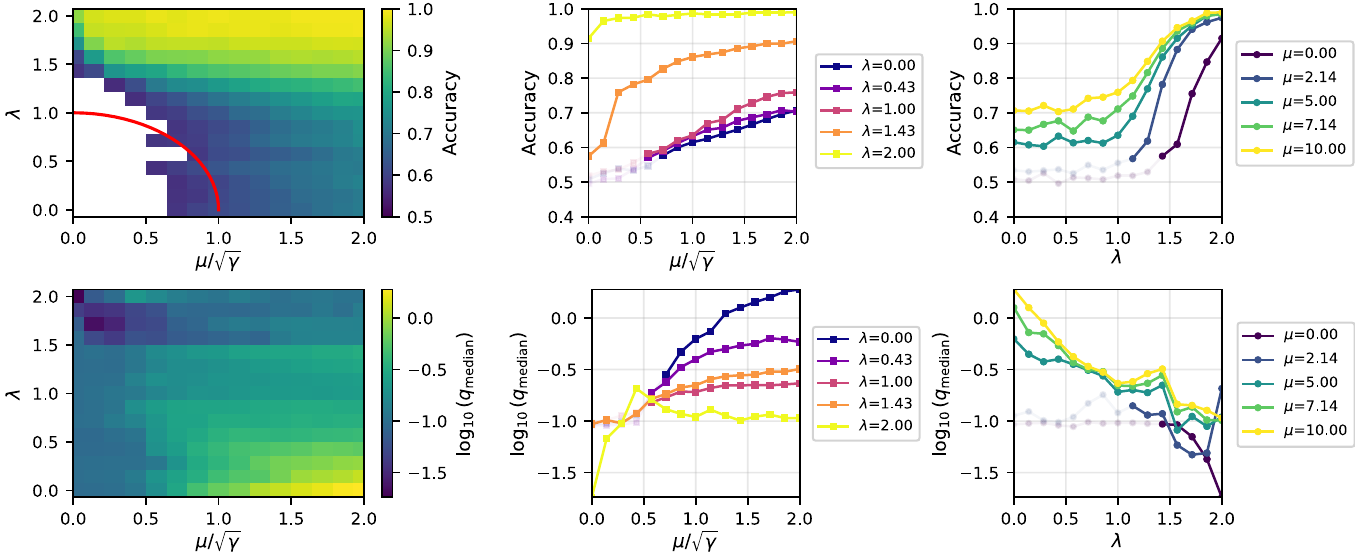}
		\caption{\textbf{CSBM dataset.} \textbf{(Top) test accuracy}. Left: accuracy on the parameter-space $(\lambda,\mu/\sqrt{\gamma})$. The red quarter-circle is the theoretical threshold at large $n$. When the accuracy $<$ 0.56, it is left blank. Two right-top plots: same for some choices of $\lambda$ and $\mu/\sqrt{\gamma}$. \textbf{(Bottom) median of $q$-values} for all nodes of all graphs (from both classes). Left: median of $q$-values on the parameter-space $(\lambda,\mu/\sqrt{\gamma})$. Two right-top plots: same for some choices of $\lambda$ and $\mu/\sqrt{\gamma}$. For the four right-most plots: when accuracy is below 0.56, transparency is used. All plots are averaged over 6 runs. Learned polynomials are not plotted: they all converge close to $\lambda/2$ throughout the parameter space.
		}
		\label{fig:csbm}
	\end{figure}
	
	\noindent\textbf{CSBM (Contextual Stochastic Block Model).}
	The CSBM~\cite{deshpande_contextual_2018} is a random graph model with node features, where topology and features are correlated with the same latent communities. Since it admits a known asymptotic recovery threshold, it provides a controlled setting to evaluate whether the learned explanation $\{q_i\}$ correctly identifies the source of the predictive signal (features, topology, or both); see Appendix~\ref{sec:csbm_background} for details. The topology is sampled from a stochastic block model with signal strength $\lambda$, where edges are more likely within than across communities. Weak recovery from topology alone is possible iff $\lambda>1$. Node features are drawn from a Gaussian mixture aligned with the labels, with signal strength $\mu/\sqrt{\gamma}$; weak recovery from features alone is possible iff $\mu/\sqrt{\gamma}>1$. Joint weak recovery is possible~\cite{deshpande_contextual_2018} iff $\lambda^2+\mu^2/\gamma>1$. 
	We introduce the following classification task: distinguish connected CSBM graphs (class~1) from a null model (class~0). For class~0, we sample a CSBM graph with the same $\lambda$, then randomly rewire its edges while preserving degree distribution and connectivity, removing community structure; features contain no signal ($\mu=0$). We introduce this dataset to evaluate our method in a controlled yet challenging setting. For each $(\lambda,\mu/\sqrt{\gamma})$, we generate 2000 graphs (1000 per class) and use a 10-fold split.
	Graphs have $n=100$ nodes split into two equal communities and average degree $d=10$. We fix $\gamma=25$.

	As shown in Figure~\ref{fig:csbm}, 
	the model correctly tracks the theoretical threshold\footnote{The theoretical threshold is sharp (it is a phase transition) for $n$ and $p$ (the node features' dimension) growing to infinity with $\gamma=n/p$ constant. For finite $n$ and $p$, slightly correlated detections inside the red quarter-circle are to be expected.}: accuracy is high above the curve $\lambda^2+\mu^2/\gamma=1$ and drops to chance below it. The region at small $\mu$ and just over the threshold $\lambda>1$ is difficult to recover. 
	In fact, this region is well-known to be challenging, see~\cite{massoulie_community_2014,decelle_asymptotic_2011}. 
	Of particular interest for us here, the median of the learned $q$ values (bottom row) transitions smoothly from large values (feature-dominated regime, large $\mu/\sqrt{\gamma}$) to small values (topology-dominated regime, large $\lambda$), with an intermediate range in the mixed regime. The explanation faithfully detects the underlying signal structure, with no direct supervision on which source to use.

	\subsection{Resistance to oversquashing}
	\label{sec:exp-oversquashing}
	
	We evaluate on the ECHO-diam task from the ECHO benchmark \cite{miglior2026can}: it is a graph regression task and consists in predicting the diameter of a synthetic graph. This requires capturing the longest shortest path, which is inherently a global property. Results are shown in Figure~\ref{fig:echo}. The learned filter is low-pass, and the $\ma{Q}$-network assigns large values to peripheral nodes (leaf-like or degree-one endpoints) and small values to interior nodes. Interior nodes with small $q$ undergo heavy spectral smoothing and carry a nearly uniform signal; peripheral nodes with large $q$ effectively anchor this smooth field with their own features. Global pooling aggregates this contrast into a prediction: the more spread-out the periphery, the higher the diameter. Intuitively, the model identifies the endpoints of the longest path and uses their amplitude contrast with the interior as a proxy for the number of nodes lying between them.
	The test MAE is $1.17$, between the best deep GNN ($\sim 1.6$) and the best transformer ($1.05$), as reported in~\cite{miglior2026can}, with the added benefit of a built-in explanation of how the regression is performed.

	\begin{figure}[t]
		\centering
		\includegraphics[width=\textwidth]{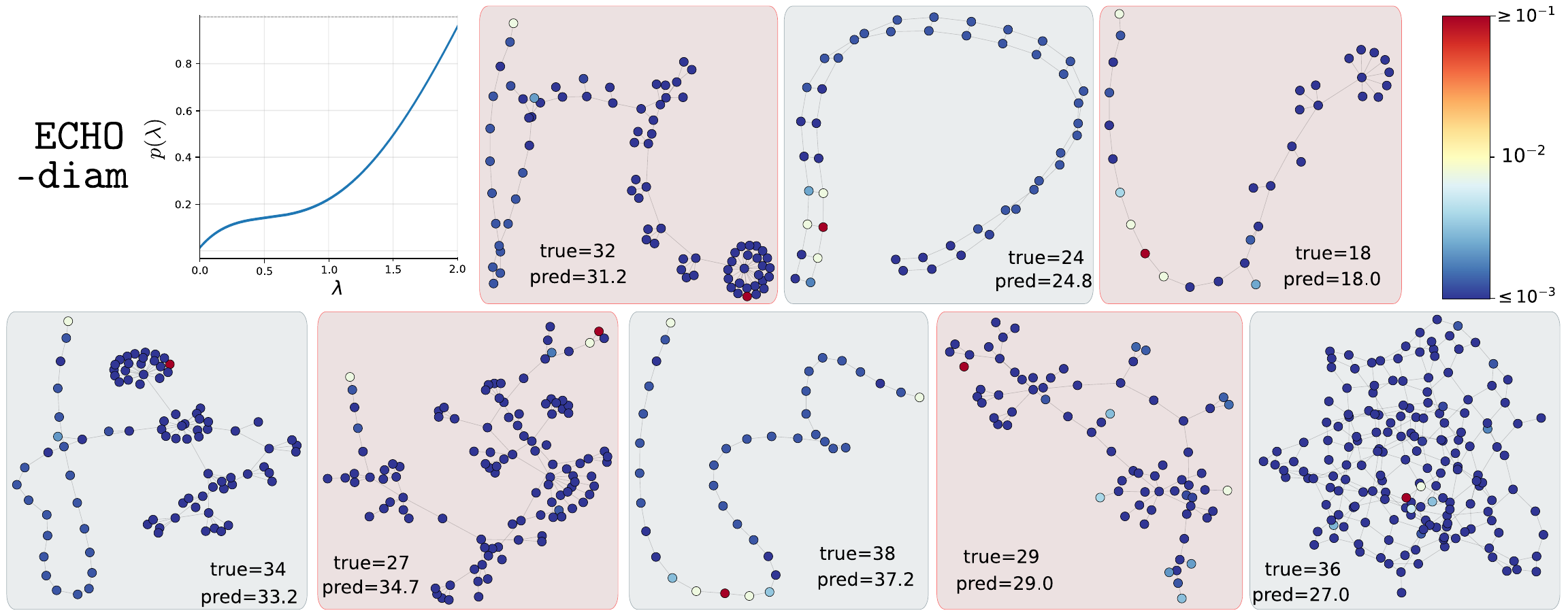}
		\caption{\textbf{ECHO-diam benchmark.} The learned polynomial along with the learned $q$-values plotted on 8 example graphs (log scale; blue: $q_i\leq 10^{-3}$, red: $q_i\geq 10^{-1}$). The true diameter is written, along with the prediction of the Tikhonov layer. 
			Best (in terms of validation loss) out of 5 runs. The average test MAE is $1.20$ (the run shown has a test MAE of $1.17$). The MAE is the average error of prediction of the diameter. }
		\label{fig:echo}
	\end{figure}
	
	\section{Concluding remarks}\label{sec:conclusion}
	We propose an original take on the problem of interpretability in GNNs. By severely constraining the form of the layer, we force the network to learn by-products (the node importance $q$-values and the graph filter $p(\lambda)$) that give us insight into how the network is solving the task. Interpretation is a difficult endeavour. Our method encounters two common drawbacks of the literature: i) some explanations can still be hard to understand for us humans (see, for instance, the second explanation for ECHO-diam discussed in Appendix~\ref{add_expe}), ii) some tasks have many different explanations (see, for instance, the second explanation for Triangles discussed in Appendix~\ref{add_expe}). Further research could be to add an auxiliary loss to favor some explanations rather than others. A limitation of the Tikhonov layer is its computational cost in cases where the task requires PCG to approximate long-distance interactions encoded in $\ma{R}$ (as one needs to increase the number of PCG iterations). Promising solutions come from random numerical linear algebra that are able to obtain Monte-Carlo approximations of $\ma{R}\vec{x}$ (for now only in the case where $p(\ma{L})=\ma{L}$), without computing $\ma{R}$, via carefully designed random walks on the graph~\cite{barthelme2025spectrum}, for a total cost linear in $n$.  
	
	\begin{ack}
		This work was partially funded by the European Union (Marie Curie
		project Rand4TrustPool, 101148828), the R\'egion Auvergne-Rhône-Alpes project
		TrustGNN, the ANR project GRANOLA (ANR-21-CE48-0009), and the Norwegian Research Council project 345017: \emph{RELAY: Relational Deep Learning for Energy Analytics}. We gratefully thank NVIDIA Corporation for donating the GPUs used in this project.
	\end{ack}

	\medskip
	
	{\small
		\bibliography{biblio.bib}
	}

	
	\appendix

	\section{Proofs}
	\label{app:proofs}
	
	\begin{lemma}\label{lem:Khop}
		For a fixed graph $\mathcal G$, $\ma P=p(\ma{L})$ has complete $K$-hop support for generic polynomial coefficients. 
		Equivalently, the set of coefficients for which the property fails
		has Lebesgue measure zero.
	\end{lemma}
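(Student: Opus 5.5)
The plan is to exploit the fact that each entry of $\ma P=p(\ma L)$ is a \emph{linear} function of the coefficients of $p$. Write $p(\lambda)=\sum_{k=0}^K c_k\lambda^k$, so that
\[
P_{ij}=\sum_{k=0}^K c_k\,(\ma L^k)_{ij}=\langle \vec c,\vec v^{(ij)}\rangle,\qquad v^{(ij)}_k=(\ma L^k)_{ij}.
\]
Localization already gives $P_{ij}=0$ when $d_{\mathcal G}(i,j)>K$. So the property can only fail if $P_{ij}=0$ for some pair with $d_{\mathcal G}(i,j)\le K$. For a fixed such pair, the failure set $\{\vec c:\langle\vec c,\vec v^{(ij)}\rangle=0\}$ is a hyperplane in $\mathbb R^{K+1}$, hence Lebesgue-null, \emph{provided} $\vec v^{(ij)}\neq\vec 0$. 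There are finitely many pairs, so the overall failure set is a finite union of null sets and is null. If one insists that $\deg p=K$ exactly, the extra exclusion $c_K=0$ is also a null hyperplane. The key step is therefore to show $\vec v^{(ij)}\ne\vec 0$ for every pair with $d_{\mathcal G}(i,j)\le K$.

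For this I would take $k=d:=d_{\mathcal G}(i,j)$ and show $(\ma L^d)_{ij}\neq 0$.

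\emph{Case $d=0$.} We have $(\ma L^0)_{ii}=1$, so $v^{(ii)}_0=1$. This case also covers isolated nodes, whose rows and columns of $\ma L$ are zero by convention. For them the only pair to check is $(i,i)$, with $P_{ii}=c_0$.

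\emph{Case $d\ge1$.} Write $\ma L=\ma I-\ma N$ on the non-isolated part, with $\ma N=\ma D^{-1/2}\ma A\ma D^{-1/2}$ entrywise nonnegative and $N_{uv}>0$ exactly when $u\sim v$. Expanding binomially (the two terms commute),
\[
(\ma L^d)_{ij}=\sum_{m=0}^d\binom dm(-1)^m(\ma N^m)_{ij}.
\]
Since $(\ma N^m)_{ij}$ is a sum over walks of length $m$ from $i$ to $j$, it vanishes for $m<d$. Hence $(\ma L^d)_{ij}=(-1)^d(\ma N^d)_{ij}$. This quantity is nonzero: $(\ma N^d)_{ij}$ is a sum of nonnegative products over length-$d$ walks, and at least one shortest path contributes a strictly positive product, so no cancellation can occur.

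The main (mild) obstacle is this no-cancellation argument, which is handled by the sign structure of $\ma N$. The remaining point is compatibility with the constraint $0<p<1$ on $[0,2]$, or with the Bernstein parametrization. The admissible coefficient set is open, and the map $\theta\mapsto\vec c$ is a smooth change of basis composed with a coordinatewise diffeomorphism (the sigmoid). The preimage of a null set under such a map is therefore null, so genericity transfers to the learnable parameters $\{\theta_k\}$ as well.
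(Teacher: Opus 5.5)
Your proposal is correct and follows essentially the same route as the paper. Each $P_{ij}$ is a linear form in the coefficients, made nontrivial by $(\ma L^d)_{ij}=(-1)^d(\tilde{\ma A}^d)_{ij}\neq 0$ at $d=d_{\mathcal G}(i,j)\ge 1$ (and $(\ma L^0)_{ii}=1$ when $i=j$), so each failure set is a hyperplane and the finite union is null. Your extra remarks (nonnegativity of $\tilde{\ma A}$ ruling out cancellation, the isolated-node convention, and transferring genericity to the Bernstein/sigmoid parameters $\{\theta_k\}$) make explicit points the paper leaves implicit.
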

	\begin{proof}
		Write $\ma L = \ma I-\tilde{\ma A}$ with $\tilde{\ma A}=\ma D^{-1/2}\ma A\ma D^{-1/2}$. \textit{Step 1:} fix two nodes $i\neq j$ and denote by  $0<d_{\mathcal G}(i,j)=m\le K$ their hop distance.
		Since no walk of length $\ell<m$ connects $i$ to $j$, $(\tilde{\ma A}^\ell)_{ij}=0$ for all $\ell<m$.
		The binomial theorem then boils down to
		\[
		(\ma L^m)_{ij} = \sum_{\ell=0}^m \binom{m}{l}(-1)^\ell(\tilde{\ma A}^\ell)_{ij} = (-1)^m(\tilde{\ma A}^m)_{ij} \neq 0
		\]
		as $\tilde{\ma A}^m$ counts weighted walks of length $m$, of which at least one exists between $i$ and $j$.
		Thus $\ma{P}_{ij}(\vec\alpha)=\sum_{k=0}^K\alpha_k(\ma L^k)_{ij}$ is a non-trivial linear form in $\vec\alpha$,
		so its zero set is a hyperplane in $\mathbb R^{K+1}$, hence of measure zero. \textit{Step 2:} If $i=j$, hence $m=0$, one has: $(\ma L^m)_{ii}=1\neq 0$ and $\ma{P}_{ii}(\vec\alpha)=\sum_{k=0}^K\alpha_k(\ma L^k)_{ii}$ is also a non-trivial linear form in $\vec\alpha$: its zero set is also of measure zero. \textit{Step 3:} 
		The conclusion follows by taking a finite union over all pairs $(i,j)$ with $d_{\mathcal G}(i,j)\le K$.
	\end{proof}

	\begin{proof}[Proof of Proposition~\ref{prop:properties}] \emph{(i)} Recall that the spectrum of $\ma{L}$ is included in $[0,2]$ such that $p(\ma{L})\succ 0$. As $\ma{Q}\succ 0$, their sum is also positive-definite: the inverse exists. 
		
		\emph{(ii)} Set $\ma{M}=\ma{Q}^{-1/2}p(\ma{L})\ma{Q}^{-1/2}\succ 0$. One has $\ma{R} = (p(\ma{L})+\ma{Q})^{-1}\ma{Q} = \ma{Q}^{-1/2}(\ma{M}+\ma{I})^{-1}\ma{Q}^{1/2}$, 
		so $\ma{R}$ is similar to the symmetric matrix $(\ma{M}+\ma{I})^{-1}$ and therefore shares its eigenvalues. Since $\ma{M}\succ 0$, every eigenvalue of $\ma{M}+\ma{I}$ is strictly greater than $1$, hence every eigenvalue of $(\ma{M}+\ma{I})^{-1}$---and of $\ma{R}$---lies strictly in $(0,1)$.        
		
		\emph{(iii)} Note that $\ma{R}^\top = \ma{Q}(\ma{P}+\ma{Q})^{-1}$. $\ma{R} = \ma{R}^\top \iff (\ma{P}+\ma{Q})^{-1}\ma{Q} = \ma{Q}(\ma{P}+\ma{Q})^{-1} \iff \ma{P}\ma{Q} = \ma{Q}\ma{P} \iff \forall (i,j) \; (q_i - q_j)P_{ij} = 0$ as $\ma Q$ is diagonal. ($\Longrightarrow$) Let $\ma{R}$ be symmetric. For any adjacent nodes $i,j$, $d_{\mathcal G}(i,j) = 1 \le K$ (as $K>0$). Complete $K$-hop support implies $P_{ij} \neq 0$, forcing $q_i = q_j$. Propagating this along paths implies $q_i = q_j$ whenever $i$ and $j$ belong to the same connected component. ($\Longleftarrow$) Conversely, suppose $q$ is constant on each connected component. If $i,j$ are in the same component, $q_i - q_j = 0$. If they are in different components, $d_{\mathcal G}(i,j) = \infty > K$, so $P_{ij} = 0$ by localization. In both cases, $(q_i - q_j)P_{ij} = 0$, so $\ma{R}$ is symmetric.
	\end{proof}
	
	\begin{lemma}\label{lem}
		Consider the matrix $\ma{M}=p(\ma{L})\ma{R}$ and let $\vec{x}$ be a vector. Let $p_{\text{max}}=\max_k p(\lambda_k)$. The following upper bound is valid: $|[\ma{M}\vec{x}]_i|\leq p_\text{max}\|x\|_2$.
	\end{lemma}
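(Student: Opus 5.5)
The plan is to bound the $\ell_2$ operator norm of $\ma{M}=p(\ma{L})\ma{R}$ by $p_{\text{max}}$. The entrywise claim then follows from $|[\ma{M}\vec{x}]_i|\le\|\ma{M}\vec{x}\|_2\le\|\ma{M}\|_2\|\vec{x}\|_2$. Write $\ma{P}=p(\ma{L})\succ 0$, which is symmetric with eigenvalues $p(\lambda_k)\in(0,p_{\text{max}}]$, so $\ma{P}^{1/2}$ is well defined. The key identity comes from $\ma{R}=(\ma{P}+\ma{Q})^{-1}\ma{Q}$:
\[
\ma{P}\ma{R}=\ma{P}(\ma{P}+\ma{Q})^{-1}\ma{Q}.
\]
A first route is to notice that $\ma{P}(\ma{P}+\ma{Q})^{-1}\ma{Q}=(\ma{Q}^{-1}+\ma{P}^{-1})^{-1}$. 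Indeed, $(\ma{P}+\ma{Q})=\ma{P}(\ma{Q}^{-1}+\ma{P}^{-1})\ma{Q}$, so inverting gives $(\ma{P}+\ma{Q})^{-1}=\ma{Q}^{-1}(\ma{Q}^{-1}+\ma{P}^{-1})^{-1}\ma{P}^{-1}$. This makes $\ma{M}$ symmetric positive definite, which is the parallel sum of $\ma{P}$ and $\ma{Q}$.

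Next I bound its largest eigenvalue. Since $\ma{Q}^{-1}\succ 0$, we have $\ma{Q}^{-1}+\ma{P}^{-1}\succeq\ma{P}^{-1}$. Matrix inversion is order-reversing on positive definite matrices, so $(\ma{Q}^{-1}+\ma{P}^{-1})^{-1}\preceq\ma{P}$. Hence $0\prec\ma{M}\preceq\ma{P}$, and $\|\ma{M}\|_2=\lambda_{\max}(\ma{M})\le\lambda_{\max}(\ma{P})=p_{\text{max}}$. As a sanity check, when $\ma{Q}=q\ma{I}$ the matrix $\ma{M}$ is the spectral filter $\tfrac{qp(\lambda)}{p(\lambda)+q}\le p(\lambda)$, which is consistent.

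The only delicate step is the algebraic identity, specifically that the factorization $\ma{P}+\ma{Q}=\ma{P}(\ma{Q}^{-1}+\ma{P}^{-1})\ma{Q}$ holds. Expanding the right side gives $\ma{P}\ma{Q}^{-1}\ma{Q}+\ma{P}\ma{P}^{-1}\ma{Q}=\ma{P}+\ma{Q}$, so it does. The Loewner monotonicity of inversion is standard: if $A\succeq B\succ0$ then $B^{-1}\succeq A^{-1}$, which one sees by congruence with $B^{-1/2}$.

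If I prefer to avoid the parallel-sum identity, there is an alternative. Conjugate by $\ma{P}^{1/2}$ to write $\ma{M}=\ma{P}^{1/2}\bigl(\ma{I}+\ma{P}^{1/2}\ma{Q}^{-1}\ma{P}^{1/2}\bigr)^{-1}\ma{P}^{1/2}$. The middle factor has norm at most $1$, which gives the same bound immediately.
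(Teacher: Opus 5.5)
Your proof is correct and follows the paper's route: both show that $\ma{M}=(\ma{Q}^{-1}+\ma{P}^{-1})^{-1}$ is symmetric positive definite and bound $\lambda_{\max}(\ma{M})\le p_{\text{max}}$ using $\ma{Q}^{-1}+\ma{P}^{-1}\succeq\ma{P}^{-1}$, which the paper phrases as $\lambda_{\min}(\ma{Q}^{-1}+\ma{P}^{-1})\ge\lambda_{\min}(\ma{P}^{-1})$. One small slip: your factorization $\ma{P}+\ma{Q}=\ma{P}(\ma{Q}^{-1}+\ma{P}^{-1})\ma{Q}$ is true, but inverting it gives $\ma{P}\ma{R}=\ma{P}\ma{Q}^{-1}(\ma{Q}^{-1}+\ma{P}^{-1})^{-1}\ma{P}^{-1}\ma{Q}$, which does not visibly collapse (what it yields directly is $\ma{Q}(\ma{P}+\ma{Q})^{-1}\ma{P}=\ma{M}^\top$), so use the equally valid ordering $\ma{P}+\ma{Q}=\ma{Q}(\ma{Q}^{-1}+\ma{P}^{-1})\ma{P}$ instead, or rely on your $\ma{P}^{1/2}$-conjugation variant, which is complete on its own.
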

	
	\begin{proof}[Proof of Lemma.]
		One has $\forall i$, $|[\ma{M}\vec{x}]_i|\leq \|\ma{M}\|_2 \|x\|_2$. Now, $p(\ma{L})$ being invertible, $\ma{M}$ can be re-written as $\ma{M}=\left(\ma{Q}^{-1}+p(\ma{L})^{-1}\right)^{-1}$: it is symmetric and positive-definite. As such, one has $\|\ma{M}\|_2= \lambda_{\text{max}}(\ma{M})=1/\lambda_{\text{min}}(\ma{Q}^{-1}+p(\ma{L})^{-1})$. To upper-bound this, we need to lower-bound the denominator. Noting that $\lambda_{\text{min}}(\ma{Q}^{-1}+p(\ma{L})^{-1})\geq\lambda_{\text{min}}(p(\ma{L})^{-1})=1/\lambda_{\text{max}}(p(\ma{L}))=1/p_{\text{max}}$ finishes the proof.
	\end{proof}
	
	\begin{proof}[Proof of Proposition~\ref{prop:harmonicity}]
		\emph{Feature preservation.} Let $\ma{M}=p(\ma{L})\ma{R}$. 
		Left-multiply $\ma{R}$ by $\ma{Q}^{-1}(p(\ma{L})+\ma{Q})$ and obtain 
		$\ma{R}-\ma{I}=-\ma{Q}^{-1}\ma{M}$. Thus $\forall i$:
		$$|(\ma{R}\vec{x})_i-x_i| 
		= |[\ma{Q}^{-1}\ma{M}\vec{x}]_i| 
		= q_i^{-1} |[\ma{M}\vec{x}]_i|\leq q_i^{-1} p_{\text{max}} \|\vec{x}\|_2$$ by Lemma~\ref{lem}. 
		
		\emph{Graph harmonicity.} Left-multiplying $\ma{R}\vec{x}$ by $p(\ma{L})+\ma{Q}$, one obtains: 
		$p(\ma{L})\vec{z} = \ma{Q}(\vec{x}-\vec{z})$, such that $|[p(\ma{L})\vec{z}]_i| = q_i|x_i-z_i|$. Now, one has:
		\begin{align*}
			q_i|x_i-z_i| = \sqrt{q_i}\sqrt{q_i(x_i-z_i)^2}.
		\end{align*}
		Denote by $F$ the functional $F(\vec{u})=\sum_j q_j(x_j-u_j)^2 + \vec{u}^\top p(\ma{L})\vec{u}$. One has:
		\begin{align*}
			q_i(x_i-z_i)^2\leq \sum_j q_j(x_j-z_j)^2 \leq F(\vec{z})
		\end{align*}
		as $\vec{z}^\top p(\ma{L})\vec{z}\geq 0$ (as $p(\ma{L})\succ 0$). By definition, $\vec{z}$ is the minimizer of the functional $F(\vec{u})$ such that: $F(\vec{z})\leq F(\vec{x})=\vec{x}^\top p(\ma{L})\vec{x}\leq p_\text{max}\|\vec{x}\|_2^2$. Putting everything back together, we finish the proof:
		\begin{align*}
			|[p(\ma{L})\vec{z}]_i|=q_i|x_i-z_i|\leq \sqrt{q_i}\sqrt{p_\text{max}}\|\vec{x}\|_
			2.
		\end{align*}
	\end{proof}
	
	\begin{proof}[Proof of Proposition~\ref{prop:density}]
		\emph{Density of the homogeneous family.}
		Fix a continuous target $h:[0,2]\to(0,1)$ and $\varepsilon>0$.
		Note that 
		both constants
		\[
		c := \min_{\lambda}\frac{h(\lambda)}{1-h(\lambda)} > 0
		\quad\text{and}\quad
		\mu := \min_{\lambda}\frac{1-h(\lambda)}{h(\lambda)} > 0
		\]
		are strictly positive (note that $c\mu\leq 1$).
		For a density argument it suffices to handle $\varepsilon < \mu$,
		so assume this without loss of generality.
		Choose any $q$ satisfying
		\[
		0 < q < \frac{c}{1+c\varepsilon};
		\]
		this is possible since $c>0$.
		Set $f(\lambda):=q(1-h(\lambda))/h(\lambda)$, so that $h(\lambda)=q/(f(\lambda)+q)$.
		Then $\min_\lambda f = q\mu$ and $\max_\lambda f = q/c$, giving
		\[
		\min_\lambda f - q\varepsilon = q(\mu-\varepsilon) > 0
		\quad\text{and}\quad
		1-\max_\lambda f - q\varepsilon = 1 - \frac{q}{c} - q\varepsilon = 1 - q\!\left(\tfrac{1}{c}+\varepsilon\right) > 0,
		\]
		where the second inequality uses $q < c/(1+c\varepsilon)$.
		Hence $f(\lambda)\in(0,1)$ for all $\lambda$.
		By the Weierstrass approximation theorem there exists a polynomial $p$
		with $\|p-f\|_\infty < q\varepsilon$.
		The two inequalities above guarantee $0 < p(\lambda) < 1$ on $[0,2]$.
		The homogeneous filter $g_q(\lambda)=q/(p(\lambda)+q)$ then satisfies
		\[
		|g_q(\lambda)-h(\lambda)|
		= \frac{q|f(\lambda)-p(\lambda)|}{(p(\lambda)+q)(f(\lambda)+q)}
		\leq \frac{q\cdot q\varepsilon}{q^2} = \varepsilon
		\]
		uniformly on $[0,2]$, establishing density.
		
		\emph{Strict generalization via non-commutativity.} 
		Let $\ma{P}=p(\ma{L})$. Since
		\[
		\ma{R}^{-1}
		=
		\ma{Q}^{-1}(\ma{P}+\ma{Q})
		=
		\ma{Q}^{-1}\ma{P}+\ma{I},
		\]
		and since $\ma{P}$ commutes with $\ma{L}$ and is invertible, we have
		\[
		[\ma{R},\ma{L}]=0
		\quad\Longleftrightarrow\quad
		[\ma{R}^{-1},\ma{L}]=0
		\quad\Longleftrightarrow\quad
		[\ma{Q}^{-1},\ma{L}]P=0
		\quad\Longleftrightarrow\quad
		[\ma{Q},\ma{L}]=0.
		\]
		where $[\ma{R},\ma{L}]=\ma{R}\ma{L}-\ma{L}\ma{R}$ denotes the commutator.
		For diagonal $\ma{Q}$,
		\[
		([\ma{Q},\ma{L}])_{ij}=(q_i-q_j)L_{ij},
		\]
		so $\ma{Q}$ commutes with $\ma{L}$ iff $q_i=q_j$ on every edge, i.e., iff $\ma{Q}$ is constant on each connected component. Thus, on a connected graph, any nonconstant $\ma{Q}$ gives an operator $\ma{R}$ that does not commute with $\ma{L}$.
		
		Finally, suppose that $\ma{R}=g(\ma{L})$ for some scalar function $g$.
		Then $\ma{R}$ commutes with $\ma{L}$, so by the previous paragraph $\ma{Q}=q_c\ma{I}$ on each connected component $G_c$. 
		On the zero-eigenvector supported on $G_c$, $\ma{R}$ acts by the scalar
		\[
		\frac{q_c}{p(0)+q_c}.
		\]
		But $g(\ma{L})$ acts as the same scalar $g(0)$ on the whole nullspace of $\ma{L}$. Since $p(0)>0$, the map
		\[
		q\mapsto \frac{q}{p(0)+q}
		\]
		is strictly increasing, so all $q_c$ must be equal. 
		Hence $\ma{Q}=q\ma{I}$. 
		Therefore, $\ma{R}$ can be written as a scalar spectral filter $g(\ma{L})$ only in the homogeneous case.
	\end{proof}
	
	\begin{proof}[Proof of Proposition~\ref{prop:receptive}]
		\emph{(i)} Define $\ma{M}=p(\ma{L})+\ma{Q}$. Since $\ma{R}_{ij}=(\ma{M}^{-1})_{ij}\,q_j$ with $q_j>0$, the zero
		pattern of $\ma{R}$ is that of $\ma{M}^{-1}$.
		Both $p(\ma{L})$ and $\ma{Q}$ are block-diagonal with respect to
		connected components, hence so are $\ma{M}$ and
		$\ma{M}^{-1}$, giving $\ma{R}_{ij}=0$ between components.
		
		\emph{(ii) Increasing degree-one case.}
		Write the increasing degree-one polynomial as $p(\lambda)=a+b\lambda$ with $b> 0$ (and necessarily $a>0$ as $0<p(\lambda)<1$ must hold at $\lambda=0$), so that $\ma{M}=p(\ma{L})+\ma{Q}=a\ma{I}+b\ma{L}+\ma{Q}=b(\ma{L}+\ma{Q}')$ with $\ma{Q}'=\frac{1}{b}\left(\ma{Q}+a\ma{I}\right)\succ0$. 
		On each connected component, $\ma{L}+\ma{Q}'$ is a symmetric positive definite matrix with non-positive off-diagonal entries, and it is irreducible when the component has more than one node. 
		Hence, by the standard inverse-positivity of irreducible nonsingular $M$-matrices,
		\[
		\left((\ma{L}+\ma{Q}')^{-1}\right)_{ij}>0
		\]
		for all $i,j$ in the same connected component. 
		The singleton case is immediate. 
		Since:
		\[
		\ma{R} = \ma{M}^{-1}\ma{Q} = \frac{1}{b}(\ma{L}+\ma{Q}')^{-1}\ma{Q},
		\]
		we get, for every pair $i,j$ within the same connected component:
		\[
		\ma{R}_{ij} = \frac{q_j}{b} \left((\ma{L}+\ma{Q}')^{-1}\right)_{ij} >0.
		\]
		
		\emph{(iii) Other cases.} Let us first treat the simple case $i=j$. One has: $\ma{R}_{ii}=\left(\ma{M}^{-1}\right)_{ii}q_i$ with $\ma{M} = p(\ma L) + \ma{Q}$. As $\ma{M}$ is positive definite, so is $\ma{M}^{-1}$. A well-known fact of  positive definite matrices is that their diagonal entries are always positive. As $q_i$ is also positive, we have that $\forall i, \ma{R}_{ii}>0$. 
		
		Now, let $\mathcal{C}$ be the set of all ordered pairs $(i,j)$ of distinct vertices belonging to the same connected component of $\mathcal{G}$. Fix a pair $(i,j) \in \mathcal{C}$.  Let the entries $q_1, \dots, q_n$ of the diagonal matrix $\ma{Q}$ be treated as independent formal variables. By Cramer's rule, the Tikhonov entry is given by
		\[
		\ma{R}_{ij} = \bigl(\ma{M}^{-1}\bigr)_{ij}\, q_j = \frac{\ma{C}_{ji}(\ma{M})\, q_j}{\det(\ma{M})},
		\]
		where $\ma{C}_{ji}(\ma{M})$ is the $(j,i)$-th cofactor of $\ma{M}$. By the Leibniz formula for determinants, both $\det(\ma{M})$ and the cofactors $\ma{C}_{ji}(\ma{M})$ are multivariate polynomials in $\mathbb{R}[q_1, \dots, q_n]$. Since $\ma{P} \succ 0$ and $\ma{Q} \succ 0$, $\ma{M}$ is positive definite, ensuring $\det(\ma{M}) > 0$ for all positive diagonal $\ma{Q}$.
		
		The cofactor is given by $\ma{C}_{ji}(\ma{M}) = (-1)^{i+j} \det(\ma{M}_{\setminus j, \setminus i})$, where $\ma{M}_{\setminus j,\setminus i}$ denotes the submatrix of $\ma{M}$ obtained by deleting row $j$ and column $i$. This submatrix is indexed by rows $\mathcal{V}\setminus\{j\}$ and columns $\mathcal{V}\setminus\{i\}$, and each variable $q_k$ ($k \neq i,j$) appears exactly once, in the entry corresponding to the original diagonal position $(k,k)$ of $\ma{M}$, as part of the linear term $\ma{P}_{kk}+q_k$.
		
		Since $i$ and $j$ lie in the same connected component, there exists a shortest path in the \emph{$K$-hop graph} (\ie, the graph whose edges connect pairs of vertices at graph distance at most $K$) from $i$ to $j$. Let $i = v_0, v_1, \dots, v_m = j$ be such a shortest path, so that $d_{\mathcal{G}}(v_r, v_{r+1}) \le K$ for all $0 \le r < m$, and $m$ is minimal. By the complete $K$-hop support hypothesis, each off-diagonal entry along this path satisfies $\ma{P}_{v_r, v_{r+1}} \neq 0$.
		
		Moreover, by minimality of the path, for any $r, s$ with $|r - s| > 1$ we have $d_{\mathcal{G}}(v_r, v_s) > K$: otherwise, replacing the sub-path $v_r, \dots, v_s$ by the single hop $v_r \to v_s$ would yield a strictly shorter path, contradicting minimality. By $K$-localization, it follows that $\ma{P}_{v_r, v_s} = 0$ whenever $|r-s|>1$.\\	
		We now exhibit a monomial in the Leibniz expansion of $\det(\ma{M}_{\setminus j, \setminus i})$ that is \emph{not canceled} by any other term. Consider the permutation
		\[
		\sigma^*: \quad v_r \longmapsto v_{r+1} \quad (0 \le r \le m-1), \qquad k \longmapsto k \quad (k \in \mathcal{V} \setminus \{v_0, \dots, v_m\}),
		\]
		which is a valid bijection from $\mathcal{V}\setminus\{j\} = \mathcal{V}\setminus\{v_m\}$ to $\mathcal{V}\setminus\{i\} = \mathcal{V}\setminus\{v_0\}$. This permutation selects: the off-diagonal constants $\ma{P}_{v_r, v_{r+1}}$ for the path vertices; and the linear diagonal terms $(\ma{P}_{kk} + q_k)$ for all $k \in \mathcal{V} \setminus \{v_0, \dots, v_m\}$. Choosing the variable $q_k$ from each such diagonal factor yields the monomial
		\[
		\gamma_{ij} \cdot \prod_{k \,\notin\, \{v_0, \dots, v_m\}} q_k,
		\qquad \text{where} \quad
		\gamma_{ij} = \operatorname{sgn}(\sigma^*)(-1)^{i+j} \prod_{r=0}^{m-1} \ma{P}_{v_r, v_{r+1}} \neq 0.
		\]
		This monomial involves precisely the variable set $\mathcal{S} = \{q_k \mid k \notin \{v_0, \dots, v_m\}\}$ and has degree $|\mathcal{S}| = n - (m+1)$.
		
		We claim no other permutation $\tau \neq \sigma^*$ contributes a term containing $\prod_{k \in \mathcal{S}} q_k$. Any such $\tau$ must also fix all $k \in \mathcal{V} \setminus \{v_0,\dots,v_m\}$ (to select the diagonal factor $\ma{P}_{kk}+q_k$ and thus include $q_k$), meaning $\tau$ must act as a bijection from $\{v_0,\dots,v_{m-1}\}$ to $\{v_1,\dots,v_m\}$ on the path vertices. But since $P_{v_r,v_s}=0$ for $|r-s|>1$, the only bijection from $\{v_0,\dots,v_{m-1}\}$ to $\{v_1,\dots,v_m\}$ that yields a nonzero product $\prod_r \ma{M}_{v_r,\tau(v_r)}$ is the forward shift $\sigma^*$ itself. In other words, $\sigma^*$ is the unique permutation contributing the monomial $\gamma_{ij}\prod_{k \in \mathcal{S}}q_k$: this term cannot be canceled and $C_{ji}(\ma{M})$ is not identically zero as a polynomial in $\mathbb{R}[q_1, \dots, q_n]$.\\
		For each pair $(i,j) \in \mathcal{C}$, define the exceptional set
		\[
		\mathfrak{E}_{ij} = \bigl\{\ma{Q} \in \mathbb{R}^n_{>0} \;\big|\; \ma{C}_{ji}(\ma{M}) = 0\bigr\},
		\]
		which is the zero-set of a nonzero polynomial, hence an algebraic variety of Lebesgue measure zero in $\mathbb{R}^n$. Define the global exceptional set
		\[
		\mathfrak{E} = \bigcup_{(i,j) \in \mathcal{C}} \mathfrak{E}_{ij}.
		\]
		Since $\mathcal{G}$ is finite, $|\mathcal{C}| \le n(n-1)$, so $\mathfrak{E}$ is a finite union of measure-zero sets and thus has Lebesgue measure zero.
		
		For any positive diagonal matrix $\ma{Q} \notin \mathfrak{E}$, we have $\ma{C}_{ji}(\ma{M}) \neq 0$ for all $(i,j) \in \mathcal{C}$ simultaneously. Since $q_j > 0$ and $\det(\ma{M}) > 0$, it follows that
		\[
		\ma{R}_{ij} = \frac{\ma{C}_{ji}(\ma{M})\,q_j}{\det(\ma{M})} \neq 0
		\]
		for every pair $(i,j)$ in the same connected component, completing the proof.
	\end{proof}
	
	\begin{proof}[Proof of Proposition~\ref{prop:decay}]
		Set $\ma{S}=\ma{I}+\ma{Q}^{-1/2}p(\ma{L})\,\ma{Q}^{-1/2}$ such that 
		$\ma{Q}^{1/2}\ma{S}\,\ma{Q}^{1/2}=p(\ma{L})+\ma{Q}$, giving 
		$\ma{R}=\ma{Q}^{-1/2}\ma{S}^{-1}\ma{Q}^{1/2}$, hence 
		$|\ma{R}_{ij}|=\sqrt{q_j/q_i}\;|(\ma{S}^{-1})_{ij}|$.
		The matrix $\ma{S}$ is SPD and is $K$-localized: $\ma{S}_{ij}=0$ whenever 
		$d_{\mathcal{G}}(i,j)>K$.
		Write $d=d_{\mathcal{G}}(i,j)$.
		By the Demko--Moss--Smith bound~\cite{demko1984decay},
		\[
		|(\ma{S}^{-1})_{ij}| \leq 2\|\ma{S}^{-1}\|_2\, \rho_0^{d/K}
		\]
		where $\rho_0=(\sqrt{\kappa}-1)/(\sqrt{\kappa}+1)$ and $\kappa=\mathrm{cond}(\ma{S})$.
		Using $\mathrm{arctanh}(x)\geq x$ for $x\in(0,1)$, $\rho_0\leq e^{-2/\sqrt{\kappa}}$. 
		Finally, since $\ma{S}\succeq\ma{I}$, $\|\ma{S}^{-1}\|_2\leq 1$, yielding $|(\ma{S}^{-1})_{ij}| \leq 2e^{-\frac{2}{K\sqrt{\kappa}}d}$ and the result.
	\end{proof}

	\begin{proof}[Proof of Proposition~\ref{prop:structural_faithfulness}]
		Write $\ma{P}=p(\ma{L})$. 
		Since $p(\lambda)>0$ on $[0,2]$, we have $P\succ0$, hence $P$ is invertible. 
		Moreover $\ma{R}^{-1} = \ma{Q}^{-1}(P+\ma{Q}) = \ma{Q}^{-1}P+\ma{I}$.
		If two positive diagonal matrices $\ma{Q}_1,\ma{Q}_2$ produce the same operator $\ma{R}$, then $\ma{Q}_1^{-1}P+\ma{I} = \ma{Q}_2^{-1}P+\ma{I}$, so $(\ma{Q}_1^{-1}-\ma{Q}_2^{-1})P=0$.
		Since $P$ is invertible, $\ma{Q}_1^{-1}=\ma{Q}_2^{-1}$, hence $\ma{Q}_1=\ma{Q}_2$.
	\end{proof}
	
	\begin{proof}[Proof of Proposition~\ref{prop:identifiability}]
		Let $\ma{P}_j = p_j(\ma{L})$ and $\ma{R}$ be the common operator. Inverting $\ma{R} = (\ma{P}_j + \ma{Q}_j)^{-1}\ma{Q}_j$ we obtain $\ma{R}^{-1} = \ma{Q}_j^{-1}\ma{P}_j + \ma{I}$, 
		hence $
		\ma{Q}_1^{-1}\ma{P}_1 = \ma{Q}_2^{-1}\ma{P}_2$.
		Let
		\[
		\ma{\Delta} = \ma{Q}_2 \ma{Q}_1^{-1} = \mathrm{diag}(\delta_1,\dots,\delta_n), 
		\quad \delta_i > 0.
		\]
		Then $\ma{P}_2 = \ma{\Delta}\,\ma{P}_1.$ 
		
		Since $\ma{P}_1$ is invertible (as $p_1(\lambda)>0$ for $\lambda\in[0,2]$), $\ma{\Delta}=\ma{P}_2\ma{P}_1^{-1}$.
		Both $\ma{P}_2$ and $\ma{P}_1^{-1}$ commute with $\ma{L}$, hence $\ma{\Delta}$ commutes with $\ma{L}$. Since $\ma{\Delta}$ is diagonal, $(\delta_i-\delta_j)L_{ij}=0$ for all $i,j$, so $\delta_i$ is constant on each connected component.
		Write this constant as $\alpha_c$.
		
		Now let $\vec{u}_c\neq \vec{0}$ be an eigenvector of $\ma{L}$ associated to eigenvalue $0$, supported on component $\mathcal{G}_c$. Since $\ma{P}_j=p_j(\ma{L})$,
		\[
		\ma{P}_2\vec{u}_c=p_2(0)\vec{u}_c, \qquad \ma{P}_1\vec{u}_c=p_1(0)\vec{u}_c.
		\]
		Using $\ma{P}_2=\ma{\Delta}\ma{P}_1$ on this component gives $p_2(0)\vec{u}_c=\alpha_c p_1(0)\vec{u}_c$.
		Since $p_1(0)>0$, all $\alpha_c=p_2(0)/p_1(0)$ are equal. 
		Denoting the common value by $\alpha$, we obtain
		\[
		\ma{Q}_2=\alpha\ma{Q}_1, \qquad \ma{P}_2=\alpha\ma{P}_1.
		\]
	\end{proof}
	
	\begin{proof}[Proof of Proposition~\ref{prop:multichannel_id}]
		Equal pre-activation outputs up to a permutation of channel blocks means 
		$\ma{R}'_j\ma{H}=\ma{R}_{\pi(j)}\ma{H}$ for some 
		permutation $\pi$. Since 
		$\ma{H}$ has rank $n$, it has a right inverse.
		Hence
		$\ma{R}'_j=\ma{R}_{\pi(j)}$ for each $j$. 
		Applying Proposition~\ref{prop:identifiability} to each channel 
		gives
		$\ma{Q}'_j=\alpha_{j}\,\ma{Q}_{\pi(j)}$ with $p'_j(\ma{L})=\alpha_{j}\,p_{\pi(j)}(\ma{L})$.
	\end{proof}
	
	\section{Choice of hyperparameters}
	\label{app:hp}
	\noindent\textbf{Common settings.} We use the Adam optimizer. We use one channel with a degree $K=5$ for the polynomial $p$. Skip connections are implemented in the $\ma{Q}$-network. The $\ma{Q}$-network ends with a 2-layer MLP head. $q_\text{min}=1\mathrm{e}{-}10$, $q_\text{max}=1\mathrm{e}10$. No dropout. 
	
	\noindent\textbf{How the other hyperparameters were chosen. }
	For all experiments except ECHO-diam, hyperparameters were found by sweeping (not exhaustively however -- due to computation time constraints) (i) for the Q-network: ChebNet/ARMA, hidden dimension $\in [4,8,16]$, number of layers $\in [3,5,10,15,20]$, chebychev order $\in [3,10]$ for ChebNet, number of inner layers and number of stacks $\in [3,5]$ for ARMA, the initialization\footnote{for this, we initialize the last layer of the MLP head to $0$ and set its bias to the wanted value} of $q\in [0.01, 0.1, 1.0]$ (ii) for the initial shape of the polynomial: close to $\lambda/2$ or close to flat; (iii) for the rest: hidden dimension $\in [4, 8, 16]$, patience $\in [50,150,250]$, learning rate $\in [5\mathrm{e}{-}4, 1\mathrm{e}{-}3, 5\mathrm{e}{-}3]$. Batch sizes were chosen from the original papers (for the CSBM dataset, we tried $256$ and $512$). For the global pooling: mean, sum (both without normalization), mean+max+sum with layer normalization, and sum+sumsq (the sum of squares, only for CSBM) with batch normalization. For PCG: tolerance and maximum number of iterations were not swept (1e-6 and 30, respectively, are in general very good for many tasks).
	
	For ECHO-diam, we swept 
	(i) for the Q-network: ChebNet/ARMA, hidden dimension $\in [4,8,16]$, number of layers $\in [3,5,10]$, chebychev order $\in [3,10]$ for ChebNet, number of inner layers and number of stacks $\in [3,5]$ for ARMA, the initialization value for $q: \in [0.001,0.01]$; (ii) for the initial shape of the polynomial: close to $\lambda/2$ or close to flat; (iii) for the rest: hidden dimension $\in[64,128,256,512]$, patience $\in [50,150]$, learning rate $\in [$5e-3, 1e-2$]$. For the global pooling: mean (without normalization), mean+max+sum with layer normalization. PCG tolerance and maximum number of iterations were swept: PCG tolerance $\in [$1e-6, 1e-10, 1e-14$]$, maximum number of iterations $\in [20,30,40,50]$ (best results were obtained for the smallest tolerance and the largest number of iterations: the task really needs to increase the reach of PCG in order to better approximate long-distance interactions). 
	
	Overall, we estimate that 3 full weeks of GPU time is the total time of computation of the experiments reported here. 
	
	\noindent\textbf{Colors-3} (Fig.~\ref{fig:colors3}).
	Initialization for the coefficients of the polynomial: close to $\lambda/2$. 
	Hidden dimension: 16. $\ma{Q}$-network: a ChebNet of 3 layers with degree-3 polynomials in each, hidden dimension of 8. Initialization for $q_i$: 1.0. 
	Pooling: mean+sum+max (concatenated) with a layer normalization. Batch size = 64. Patience = 50. PCG: tolerance is set to 1e-6, and the maximum number of iterations is $30$. Learning rate = 5e-3.
	
	\noindent\textbf{Clique distance} (Fig.~\ref{fig:triangles}, top).
	Initialization for the coefficients of the polynomial: close to the constant function equal to $0.5$. 
	Input features: the all-one vector. Hidden dimension: 8. $\ma{Q}$-network: a ChebNet of 5 layers with degree-3 polynomials in each, hidden dimension of 8. Initialization for $q_i$: 0.1. 
	Pooling: mean+sum+max (concatenated) with a layer normalization. Batch size = 128. Patience = 150. PCG: tolerance is set to 1e-6, and the maximum number of iterations is $30$. Learning rate = 5e-3.
	
	\noindent\textbf{Triangles} (Fig.~\ref{fig:triangles}, bottom).
	Initialization for the coefficients of the polynomial: close to $\lambda/2$. 
	Input features: the all-one vector. Hidden dimension: 8. $\ma{Q}$-network: a deep ARMA network of 20 layers with 3 inner layers and 3 stacks each, hidden dimension of 8. Initialization for $q_i$: 0.01. 
	Pooling: mean+sum+max (concatenated) with a layer normalization. Batch size = 128. Patience = 250. PCG: tolerance is set to $1e-6$, and the maximum number of iterations is $30$. Learning rate = 1e-3.
	
	\noindent\textbf{Triangles: second explanation} (Fig.~\ref{fig:triangles2}). We only list what is different from above.
	Here, the polynomial is fixed to a function very close to $\lambda/2$ (see caption of figure). 
	Hidden dimension: 16. 
	Pooling: sum with no normalization. 
	
	\noindent\textbf{CSBM} (Fig.~\ref{fig:csbm}).
	Initialization for the coefficients of the polynomial: close to $\lambda/2$. 
	Hidden dimension: 16. $\ma{Q}$-network: a ChebNet of 3 layers with degree-10 polynomials in each, hidden dimension of 8. Initialization for $q_i$: 0.1. 
	Pooling: sum+sumsq (concatenated) with a batch normalization. Batch size = 128. Patience = 50. PCG: tolerance is set to 1e-6, and the maximum number of iterations is $30$. Learning rate = 5e-4.
	
	\noindent\textbf{ECHO-diam} (Fig.~\ref{fig:echo}).
	Hidden dimension: 256 (we do as in~\cite{miglior2026can}, where they add a linear embedding layer prior to the network to test).
	Initialization for the coefficients of the polynomial: close to $\lambda/2$. 
	$\ma{Q}$-network: a ChebNet of 2 layers with degree-3 polynomials in each, hidden dimension of 8. Initialization for $q_i$: 0.001. 
	Pooling: mean with no normalization. Batch size = 512. Patience = 150. PCG: tolerance is set to 1e-14, and the maximum number of iterations is set to $50$ (number of iterations is always the stopping criterion in practice). Learning rate = 1e-2.
	
	\noindent\textbf{ECHO-diam: second explanation} (Fig.~\ref{fig:echo2}). We only list what is different from above.
	$\ma{Q}$-network: a ChebNet of 4 layers. 
	Pooling: mean+sum+max (concatenated) with a layer normalization. 
	
	\section{Background on the CSBM}\label{sec:csbm_background}
	
	The Contextual Stochastic Block Model (CSBM) combines two well-studied random models: the stochastic block model (SBM) for graph structure, and a Gaussian covariate model for node features. We consider graphs on $n$ nodes (even) split into two blocks of equal size $n/2$, with ground-truth labels $v_i\in\{-1,+1\}$.
	
	\paragraph{Graph structure: SBM.} For each pair of nodes $(i,j)$, the edge is drawn independently as a Bernoulli with parameter $p_\mathrm{in}$ if $i,j$ belong to the same block and $p_\mathrm{out}$ otherwise. Setting the average degree $\bar{d}$ and the signal parameter $\lambda\in[0,\sqrt{\bar{d}}]$:
	\[
	p_\mathrm{in} = \frac{1}{n}\bigl(\bar{d}+\lambda\sqrt{\bar{d}}\bigr),
	\qquad
	p_\mathrm{out} = \frac{1}{n}\bigl(\bar{d}-\lambda\sqrt{\bar{d}}\bigr).
	\]
	The block structure becomes detectable when $\lambda$ is large. The fundamental limit for recovering the communities (up to a constant fraction of misclassified nodes, with high probability as $n\to\infty$) is $\lambda=1$: weak recovery is possible if and only if $\lambda>1$~\cite{decelle_asymptotic_2011,massoulie_community_2014}.
	
	\paragraph{Node features: covariate model.} With feature dimension $p=n/\gamma$, node $i$ has feature vector $\vec{f}_i\in\mathbb{R}^p$ drawn as
	\[
	\vec{f}_i = \sqrt{\tfrac{\mu}{n}}\,v_i\,\vec{u} + \vec{z}_i,
	\]
	where $\vec{u}\sim\mathcal{N}(\vec{0},\ma{I}_p/p)$ is a shared random direction, $\vec{z}_i\sim\mathcal{N}(\vec{0},\ma{I}_p/p)$ is noise, and $\mu>0$ is the feature signal strength. Weak recovery from features alone is possible if and only if $\mu/\sqrt{\gamma}>1$~\cite{deshpande_contextual_2018}.
	
	\paragraph{Combined CSBM threshold.} Jointly using both graph and features, the information-theoretic threshold for weak recovery is~\cite{deshpande_contextual_2018}:
	\[
	\lambda^2 + \frac{\mu^2}{\gamma} > 1.
	\]
	This curve in the $(\lambda,\,\mu/\sqrt{\gamma})$ plane separates the recoverable from the unrecoverable regime, and is the red quarter-circle shown in Figure~\ref{fig:csbm}.
	
	\paragraph{Classification task.} The Tikhonov layer is trained to distinguish two distributions of graphs:
	\begin{itemize}
		\item \textbf{Class~1 (signal):} graphs drawn from CSBM$(\lambda,\bar{d},\gamma,\mu)$, with community-aligned features.
		\item \textbf{Class~0 (null):} features drawn with $\mu=0$ (no community signal); graph drawn from SBM$(\lambda,\bar{d})$ then randomly edge-rewired to preserve the degree distribution (removing any detectable community structure while keeping the degree sequence).
	\end{itemize}
	A model succeeding at this task must (i) jointly exploit structure and features, especially in the zone where $\lambda^2 + \frac{\mu^2}{\gamma} > 1$ and $\lambda^2<1$ and $\frac{\mu^2}{\gamma} < 1$; (ii)~automatically detect when $\mu$ is small that the features are useless noise; (ii)~automatically detect when $\lambda$ is small that the topology is useless noise.

	\section{Additional experiments}
	\label{add_expe}
	We plot, in Figures~\ref{fig:echo2} and~\ref{fig:triangles2}, additional explanations obtained with other choices of hyperparameters (see Appendix~\ref{app:hp} to see the exact changes in hyperparameters), for, respectively, the ECHO-diam and the Triangles datasets. In Figure~\ref{fig:echo2}, we see an explanation that is harder to understand than the one shown in the main text (the filter is not only penalizing high frequencies but also low frequencies, in quite a symmetrical way), but boasts a better test MAE and matches the MAE reported in the original paper for the best Transformers architecture.
	
	In Figure~\ref{fig:triangles2}, we show another explanation that is different in spirit than the one shown in the main text. In this example, the filter was not learned and set to a function very close to $p(\lambda)=\lambda/2$ (see caption), thus mimicking the traditional Tikhonov denoising setting. In the results of the run, we see that the triangle nodes are still different from the others, but this time associated with low values of $q$ compared to large values of the other nodes. The network is not counting the number of nodes in the triangle; it is diffusing the features that are in the triangle. Both explanations are equally valid (and obtain the same almost-perfect test accuracy).

	\begin{figure}[t]
		\centering
		\includegraphics[width=\textwidth]{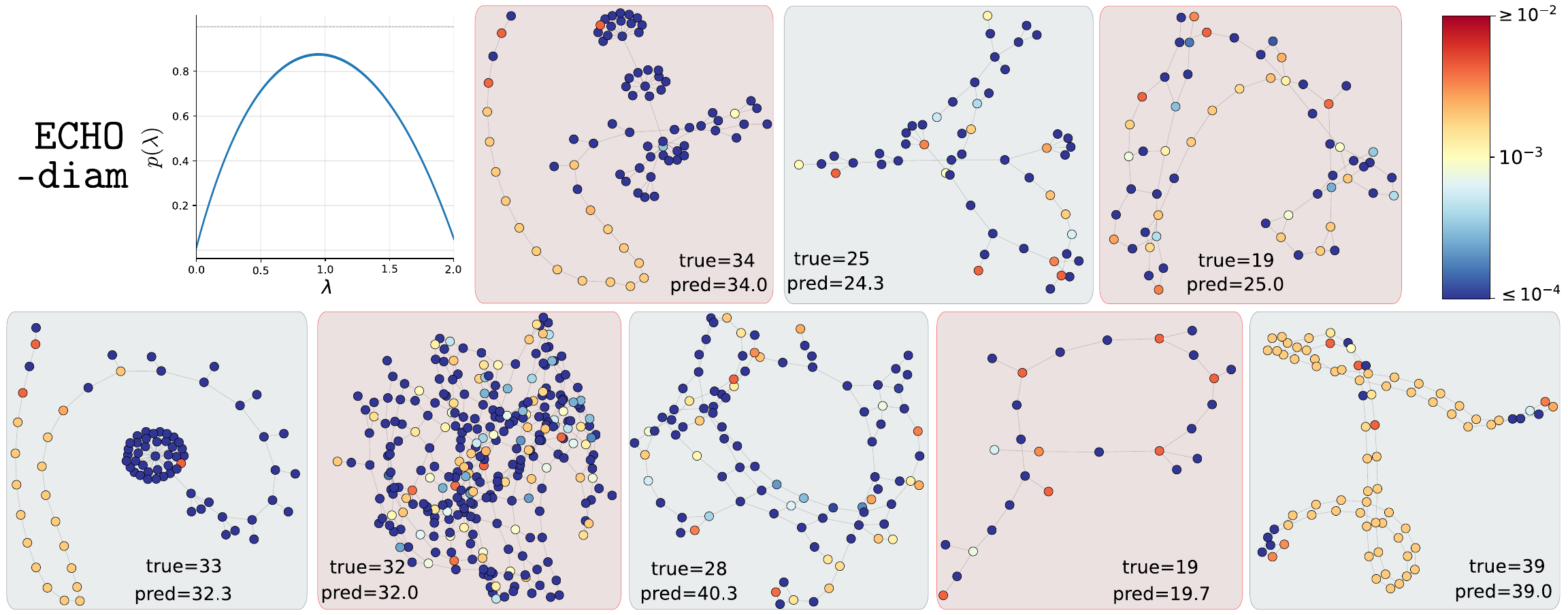}
		\caption{\textbf{ECHO-diam benchmark.} Another explanation, obtained with other hyperparameters. The learned polynomial (top-left) along with the learned $q$-values plotted on 8 example graphs (log scale; blue: $q_i\leq 10^{-4}$, red: $q_i\geq 10^{-2}$). The true diameter is written, along with the prediction of the Tikhonov layer. 
			Best run (in terms of validation loss) out of 5 runs (with different seeds). The average test MAE is $1.12$ (the run shown has a test MAE of $1.05$). Recall that the MAE here is the average error of prediction of the diameter. }
		\label{fig:echo2}
	\end{figure}
	
	\begin{figure}[t]
		\centering
		\includegraphics[width=\textwidth]{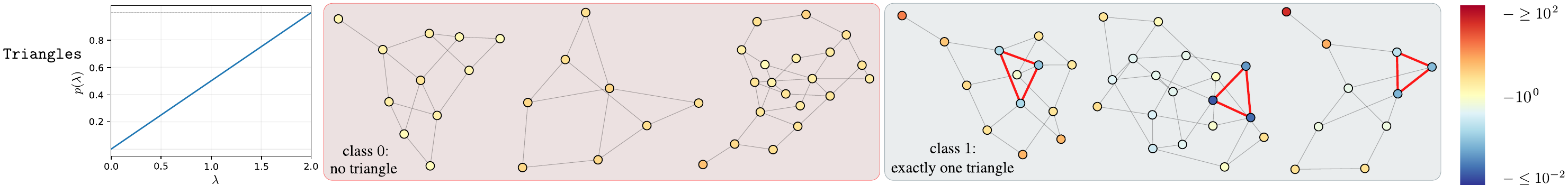}
		\caption{\textbf{Triangles benchmark.} Another explanation, obtained with other hyperparameters. The polynomial is here fixed (not learned) to $p(\lambda)=\epsilon+(\frac{1}{2}-\epsilon)\lambda$ with $\epsilon=1$e-$3$ (left). One obtains the learned $q$-values plotted on 6 example graphs (log scale; blue: $q_i\leq 10^{-2}$, red: $q_i\geq 10^{2}$): 3 from each class. 
			Best run (in terms of validation loss) out of 5 runs (with different seeds). The average test accuracy is $95\%$ (the run shown has test accuracy of $99.5\%$).}
		\label{fig:triangles2}
	\end{figure}

	
\end{document}